\documentclass[10pt]{article}
\PassOptionsToPackage{dvipsnames}{xcolor}
\usepackage{custom-style}
\PassOptionsToPackage{compress, sort, numbers}{natbib}
\usepackage[preprint]{neurips/neurips_2026}
\usepackage{hyperref}

\title{Estimating Implicit Regularization in Deep Learning}
\author{%
  Joseph H.~Rudoler
  \quad
  Kevin Tan
  \quad
  Giles Hooker
  \quad
  Konrad P.~Kording \\
  \\
  University of Pennsylvania
}

\usepackage{amsmath,bbm,bm}
\usepackage{amsfonts}
\usepackage{amsthm}
\usepackage{mathtools}

\newtheorem{thm}{Theorem}

\newtheorem{aspt}{Assumption}

\def\1{\bm{1}}

\DeclareMathAlphabet{\mathsfit}{\encodingdefault}{\sfdefault}{m}{sl}
\SetMathAlphabet{\mathsfit}{bold}{\encodingdefault}{\sfdefault}{bx}{n}

\def\gA{{\mathcal{A}}}

\def\gE{{\mathcal{E}}}
\def\gF{{\mathcal{F}}}
\def\gG{{\mathcal{G}}}

\def\gL{{\mathcal{L}}}

\def\gN{{\mathcal{N}}}

\def\gR{{\mathcal{R}}}

\def\prob{{\mathbb{P}}}

\def\R{{\mathbb{R}}}

\begin{document}

\maketitle

\begin{abstract}
  Deep learning systems are known to exhibit \emph{implicit regularization} (alt. \emph{implicit bias}), favoring simple solutions instead of merely  minimizing the loss function.
In some cases, we can analytically derive the implicit regularization -- connecting it to an equivalent penalty that augments the learning objective.
However, modern deep learning systems are complex, carrying modifications to the training procedure and architecture (e.g. early stopping, minibatching, dropout) whose effects are not always directly interpretable.
Although estimating the resulting implicit regularization could aid theorists in algorithm design and practitioners in interpreting their hyperparameter choices, this problem has received little direct attention.
It is also tractable: regularization makes weight updates deviate from loss gradients, promising a signal for identifying implicit bias.
Here we provide gradient matching methods that can be used to empirically estimate the implicit regularization.
Our method works on networks with known regularization, recovering popular explicit penalties like $\ell_1$ and $\ell_2$.
It also replicates known implicit effects, like the quadratic weight penalty induced by early stopping in gradient descent, demonstrating that it can be used to test theories of implicit regularization.
Crucially, because our method is empirical, it can handle implicit regularization in arbitrary networks. We demonstrate this use by characterizing the effects of dropout in deep networks, showing implicit $\ell_2$ effects in this popular method.
Our work shows that practitioners can use gradient matching to understand regularization in networks with implicit biases that are too complicated to derive analytically.

\end{abstract}

\vspace{-2ex}
\section{Introduction}
\vspace{-1ex}
Traditional statistical learning theory teaches us that adding more parameters to our predictive models increases the chances of overfitting to training data at the expense of their ability to generalize to novel examples.
One of the great challenges of modern machine learning is to understand how massively over-parametrized neural networks which can effectively memorize their training sets still manage to generalize \citep{zhangUnderstandingDeepLearning2021, powerGrokkingGeneralizationOverfitting2022}.
Recent progress has shown that the solution to the overfitting problem is for learning systems to favor simplicity: successful algorithms should prioritize parsimonious or highly-compressible solutions \citep{wilsonDeepLearningNot2025, rahamanSpectralBiasNeural2019, nakkiranSGDNeuralNetworks2019, valle-perezDeepLearningGeneralizes2019, yarasLawParsimonyGradient2023, maPrinciplesParsimonySelfConsistency2022}.
Neural networks therefore require constraints on their solution space -- whether explicit or implicit -- in order to achieve the impressive practical performance we observe.

We often \emph{explicitly} regularize or constrain models, adding \emph{inductive biases} which can improve generalization. Examples of explicit inductive biases include Bayesian priors \cite{ProbableNetworksPlausible1995, nealBayesianLearningNeural1996, wilsonBayesianDeepLearning2022}, weight decay \cite{hansonComparingBiasesMinimal1988}, and the translation invariance introduced in convolutional neural networks \cite{lecunGradientbasedLearningApplied1998}. Explicit regularization is one way of enforcing simplicity.

We also generally  \emph{implicitly} regularize.  A growing literature studies the implicit regularization effects of gradient-based learning \cite{soudryImplicitBiasGradient2018, aroraImplicitRegularizationDeep2019, aroraOptimizationDeepNetworks2018, chizatImplicitBiasGradient2020, neyshaburSearchRealInductive2015, nacsonImplicitBiasStep2022} and various algorithmic or architectural choices like early-stopping \cite{santosEquivalenceRegularizationTruncated1996, yaoEarlyStoppingGradient2007, ali2019continuous} or dropout \citep{wagerDropoutTrainingAdaptive2013, mianjyImplicitBiasDropout2018, mianjyDropoutNuclearNorm2019, cavazzaDropoutLowRankRegularizer2018, helmboldInductiveBiasDropout2015, weiImplicitExplicitRegularization2020}. Most theory casts implicit regularization in the form of a bias towards simpler functions (often via norm or rank minimization). It is believed that implicit regularization is essential to the success of deep learning in the over-parameterized regime \citep{bartlettDeepLearningStatistical2021a, wilsonDeepLearningNot2025, canatarSpectralBiasTaskmodel2021, woodworthKernelRichRegimes2020, alexanderNeuralNetworksNeed2025}.

Rigorous analysis of implicit regularization is generally limited to simple settings.
Much existing work focuses on tasks (deep matrix factorization \cite{gunasekarImplicitRegularizationMatrix2017, aroraImplicitRegularizationDeep2019}), models (shallow and/or linear networks \cite{mianjyImplicitBiasDropout2018, mianjyDropoutNuclearNorm2019, cavazzaDropoutLowRankRegularizer2018, chizatImplicitBiasGradient2020}), and algorithms (gradient flow or full-batch gradient descent \cite{chizatImplicitBiasGradient2020, minConvergenceImplicitBias2022, vardiMarginMaximizationLinear2022}) that differ significantly from those employed by deep learning practitioners.
The claim of these theoretical studies is often that the conclusions drawn in simplified settings might extend more generally, raising the need for empirical validation in larger networks.

In this paper we develop tools to empirically estimate implicit bias, with implications both for the theoretical study of learning algorithms and for training state of the art deep learning systems. Estimating implicit bias should allow theoreticians to empirically validate their analytical results and extend the study of implicit bias to complex networks that evade mathematical analysis, thereby providing a stronger bridge between theory and practice. For empirical researchers training deep neural networks, even heuristic estimates of implicit regularization can be a useful diagnostic.
While there is some literature on quantifying inductive bias, broadly construed, to our knowledge we are the first to directly estimate parametrized models of implicit bias as explicit regularizers.
We summarize and contrast other approaches with our own in Appendix \ref{app:related-work}. Explicitly modeling the bias permits directly testing theoretical models, and generally produces structured and interpretable results.

Our task is to find a mathematically expressible regularizer $\mathcal{R}$, such that explicitly optimizing the empirical loss $\mathcal{L}$ augmented by $\mathcal{R}$ is functionally the same as an implicitly biased procedure optimizing $\mathcal{L}$ alone. Given an a priori parametric family of regularizers, $\mathcal{R}(\cdot, \Lambda)$, we seek to estimate the values of the regularization parameters $\Lambda$. As a running example, $\ell_2$ regularization has been connected to common techniques like early stopping and dropout. This regularizer can be written as $\mathcal{R}(\theta, \lambda) = \lambda \| \theta \|_2^2$ where $\theta$ are the weights of a predictive model. Estimating $\lambda$ determines the strength of the technique's effective $\ell_2$ regularization.

Information about the implicit regularization is clearly embedded in the dynamics of learning.
Specifically, if the gradient of the training loss does not converge to zero, this may reflect additional constraints imposed by the optimization procedure which we seek to recover.
Alternatively, the same information can be gleaned at each timestep from discrepancies between the gradient of the loss and the update direction.
These deviations from simply going down the loss gradient are indicative of the underlying regularization.

We perform estimation by considering the learned weights to be the optimum of a hypothetical learning procedure that minimizes both the empirical loss \emph{and} the implicit regularizer. Alternatively, one can consider deviations of weight updates from the strict empirical loss gradient to be guided by the implicit regularizer. 
Each of these perspectives implies a system of equations involving computable gradients of the loss and the candidate regularizer, in which the only unknowns are the regularization parameters $\Lambda$.
This framework is general, encompassing arbitrary features (weights, activations, data, etc.) and flexible parameterizations (any differentiable function class).

Our contributions are as follows:
\begin{itemize}
    \item \textbf{How to estimate implicit bias.} A novel empirical framework for estimating parametrized models of implicit regularization. We do so by matching the gradients of the loss to the gradients of a proposed regularizer, based on a decomposition of the empirical loss.
    \item \textbf{Characterizing successful estimation.} We distinguish between two modes of success: 1) recovering ground truth regularization and 2) recovering a regularizer that mimics the effect of the implicit regularization. We show that indeed we achieve (1) in simple settings where results are predicted by theory. We can evaluate (2) by estimating a regularizer and using it to retrain the model, then comparing the retrained model to the original. We then provide a theoretical characterization of the conditions needed to reproduce the original model.
    \item \textbf{From theory to practice.} Our proposed method bridges the gap between theory and practice. It establishes a method for theoreticians to empirically validate their theoretical results, demonstrated by our recovery of the implicit regularization of gradient descent in OLS.
    \item \textbf{An interpretability tool for training complex networks.} Our method also extends the study of implicit bias to complex networks that evade mathematical analysis. We demonstrate this by estimating the implicit regularization induced by dropout in deep nonlinear networks, showing that the effective $\ell_2$ penalty increases monotonically with dropout rate.
\end{itemize}

\vspace{-1ex}
\section{Estimating implicit regularization}
\vspace{-1ex}
\subsection{Framework for estimation}
\paragraph{Setup.}
We study standard supervised learning, in which we receive input data $X \in \X$, and target $Y \in \Y$.
We consider a hypothesis space  $\mathcal{H}$
such that every $h \in \mathcal{H}$ is a function $h: \X \mapsto \Y$.
The hypotheses have some parametrization $h = h_\theta$ via the weights $\theta \in \Theta$ of the model we fit. As such the hypothesis space is defined by the chosen model's architecture. We train on a finite dataset $D = \left\{ (X_i, Y_i), i=1,\ldots,n\right\} \subseteq \X \times \Y$.
We implement some learning algorithm
$\mathcal{A} : \bigcup_{h=1}^\infty (\X \times \Y)^n \mapsto \mathcal{H}$
that returns a hypothesis $\hat h(D) \in \mathcal{H}$ with corresponding $\hat \theta(D) \in \Theta$.

\begin{wrapfigure}{r}{0.5\textwidth}
    \vspace{-1em}
    \centering
    \includegraphics[width=0.5\textwidth]{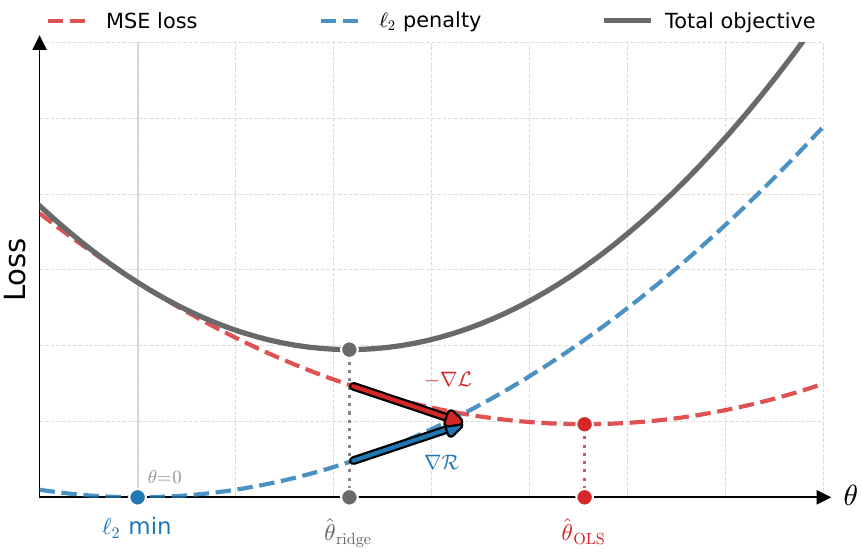}
    \caption{\textbf{Deviations from loss minima contain information about regularization.} Regularization modifies the objective function, leading to different optima than strictly minimizing empirical loss. This gap must be explained by a regularizer ($\nabla \mathcal{R}$) counteracting any nonzero loss gradient ($-\nabla \mathcal{L}$) -- this is the key logic of the paper.}
    \label{fig:tradeoff-vis}
    \vspace{-1.8em}
\end{wrapfigure}

\vspace{-1ex}
\paragraph{A hypothetical optimum.} The standard empirical risk minimization (ERM) framework considers algorithms for minimizing the empirical risk (also called the loss or objective function) on the training data. However, while the ERM solution finds the model weights that satisfy $\argmin_{\theta \in \Theta} ~\mathcal{L}(\theta, D)$, we often obtain some $\gA(D) = \hat\theta(D)$ that deviates from the strict loss minimum. For example, we often stop training before the loss has fully converged.

Our key assumption is to view the residual loss as a quantity associated with an additional objective, namely the regularization objective. We understand implicit effects by assuming the learned $\smash{\hat{\theta}}$ is an exact solution to a \emph{regularized} empirical risk minimization (RERM) problem:
$$\hat{\theta}(D) = \argmin_{\theta \in \Theta} {\color{accent-red}\underbrace{\mathcal{L}(\theta, D)}_{\text{empirical loss}}} + {\color{accent-blue} \underbrace{\mathcal{R}(\theta,D)}_{\text{regularization term}}}$$
for some regularizer $\gR$. This is a \emph{hypothetical} condition, in that the network is explicitly trained to minimize the empirical loss. We simply hypothesize that due to some architectural or algorithmic choice, at a chosen point during training the network's weights can also be characterized as a solution to the above problem. 

\vspace{-1ex}
\paragraph{Computing and matching gradients.} If the loss and regularizer are differentiable, then $\hat\theta(D)$ must satisfy
\begin{equation}\label{eq:stationarity}
    \nabla_\theta \mathcal{L}({\theta},D)|_{\hat\theta(D)}
    +
    \nabla_\theta \mathcal{R}({\theta},D)|_{\hat\theta(D)}
    =0
\end{equation}

Equivalently, we can see this as a \emph{gradient-matching condition} by rearranging:
\begin{equation}\label{eq:grad-matching}
    -\nabla_\theta \mathcal{L}({\theta},D)|_{\hat\theta(D)}
    = \nabla_\theta \mathcal{R}({\theta},D)|_{\hat\theta(D)}
\end{equation}

The above states that the gradient of the regularizer must be approximately equal to the negative gradient of the loss. Intuitively, if we are at an optimum of an RERM problem, the net gradient in parameter space must be zero. If it were otherwise, we would move in the direction of that net gradient towards a better solution. So, if we assume the observed parameters $\smash{\hat{\theta}}$ are an optimum, then the regularizer gradients \emph{must} negate the loss gradient!

Note that our stationarity condition implies a system of $p$ equations where $p$ is the dimensionality of $\theta$.
This implies that the approach in the following section can only fit regularizers with up to $p$ parameters -- the regularizer cannot have more parameters than the predictive model.
We discuss this and other challenges, along with suggested remedies, in Section \ref{sec:identifiability}. Appendix \ref{app:computing-grads} discusses computing the gradients in more detail: the loss gradient decomposes into a vector-Jacobian product that can be computed efficiently without materializing the full Jacobian.

\vspace{-1ex}
\paragraph{Learning the regularizer.}
We can learn the regularizer by parametrizing as $\mathcal{R}(\theta,D,\Lambda)$, and then solving
for $\Lambda$ in Equation
\ref{eq:grad-matching}.
In particular, we fit a \textbf{regularizer model} $\mathcal{R}({\theta}, D,\Lambda)$ where the predictive model's weights ${\theta}$ are fixed and the hyperparameter(s) $\Lambda$ is learnable. \
Since our goal is that the above expression is small, we minimize the mean-squared-error of this gradient matching task:
\begin{equation} \label{eq:ber}
    \hat{\Lambda} = \argmin_\Lambda \{ \frac{1}{p} \|
    \nabla_\theta \mathcal{R}({\theta},D,\Lambda)|_{\hat\theta(D)}
    +
    \nabla_\theta \mathcal{L}({\theta},D)|_{\hat\theta(D)}  \|_2^2\}
\end{equation}
This problem is general and allows us to use modern optimization tools like backpropagation to estimate arbitrary parameterization of $\Lambda$, but in many cases it reduces to a simple problem like linear regression and may permit closed-form solutions or more efficient implementations. 

\begin{wrapfigure}{r}{0.4\textwidth}
    \vspace{-2em}
    \centering
    \includegraphics[width=0.4\textwidth]{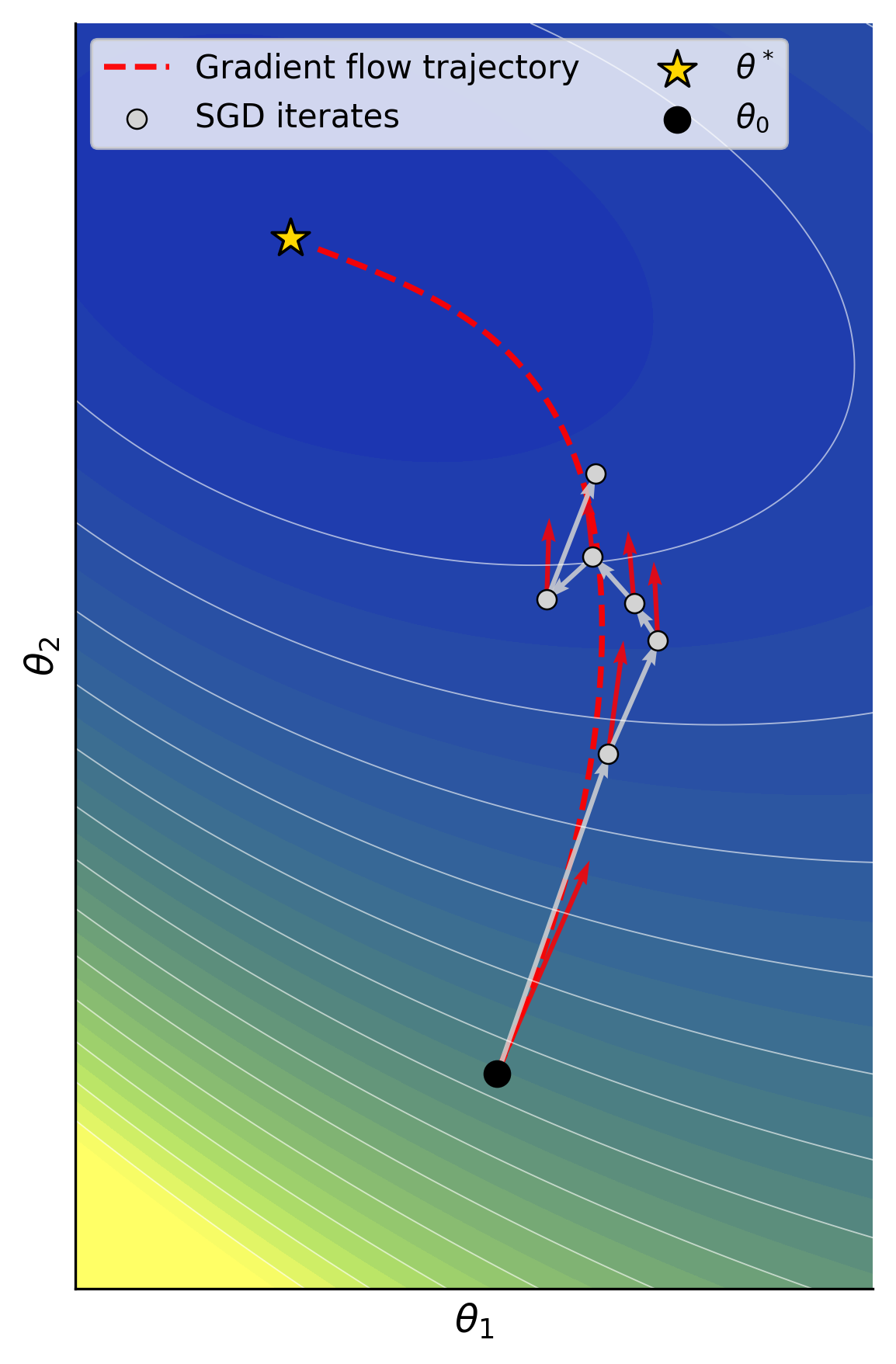}
    \caption{
    \textbf{Gradient deviations along the trajectory.} 
    In a 2-parameter toy model, SGD mini-batching causes weight updates along the trajectory to deviate from the full-batch gradient. We can model the implicit regularization underlying these deviations.}
    \label{fig:gradient-residuals}
    \vspace{-6ex}
\end{wrapfigure}

\vspace{-1ex}
\paragraph{Example: weight penalty in linear models.}
The simplest case we can consider is that of recovering a weight norm penalty in linear models, i.e. $\ell_1$ (lasso) or $\ell_2$ (ridge) regression. This case eases our analysis because we have closed-form solutions for the optimal weights.

Given input data $X \in \R^{N \times p}$ and target vector $y \in \R^N$, we want to learn $\theta \in \R^p$ such that we generate predictions
${\hat{y}}({\theta}) = X {\theta}$ which closely approximate the target vector.

We fit ridge regression with some $\lambda$:
\begin{align*}
    \nabla_\theta \left\{ \frac{1}{2} \|{y} - {\hat{y}}({\theta})\|^2 + \frac{\lambda}{2}  \|\theta \|^2 \right\} & = 0
\end{align*}
resulting in:
\(~\hat \theta_\lambda = (X^\top X+\lambda I)^{-1} X^\top y\)

Now we consider the task of recovering $\lambda$. Formally, given data $X, y$, and weights $\hat \theta_\lambda$ we want to find the solution $\hat{\lambda}$ that satisfies:
\(
-X^\top (y - X\hat{\theta}_\lambda) + \hat{\lambda} \hat \theta_\lambda = 0_p
\)

This gives us a $p$-dimensional system of linear equations for $\hat{\lambda}$, which holds exactly when $\hat{\lambda} = \lambda$.

\vspace{-1ex}
\paragraph{Deviations along the trajectory.}
An alternative perspective examines implicit regularization at each optimizer step (as in \cite{barrettImplicitGradientRegularization2022, smithOriginImplicitRegularization2021}). At a given point during the learning process, weights do not strictly follow the loss gradient; instead they follow a modified trajectory induced by the many components of the optimization process.
\begin{equation}
\label{eq:trajectory-gradient-matching}
\Delta\theta = \theta_{t+1} - \theta_{t} = -\eta \nabla_\theta (\mathcal{L} + \mathcal{R})
\end{equation}
Rearranging this expression implies that regressing $-\eta^{-1} \Delta\theta - \nabla_\theta \mathcal{L}$ on $\nabla_\theta \mathcal{R}$, where each point in the trajectory yields one datapoint, offers an alternative method for learning a regularizer. We further describe this procedure and provide theoretical guarantees for the above in the form of a martingale generalization bound within Appendix \ref{app:single-trajectory}. See Section \ref{sec:igr} for an application to estimating the implicit gradient regularization proposed by \cite{barrettImplicitGradientRegularization2022}.

Figure \ref{fig:gradient-residuals} illustrates the trajectory deviations (in a 2-parameter model) induced by SGD mini-batching. Note that this framing contains training endpoints as a special case ($\Delta \theta_t \approx 0$), so forcing $\Delta \theta = 0$ by assumption recovers the ``hypothetical optimum'' perspective we described above.

\vspace{-1ex}
\paragraph{Evaluating the estimated regularizer}
\label{sec:eval}
Once we fit $\hat{\Lambda}$, how can we decide if it is ``correct'' or even a good approximation?
We posit that there are two main modes of success:
\begin{enumerate}
    \item \textbf{Recovery of hypothesized $\Lambda$.} In settings where we know the ground truth regularization or have strong theoretical predictions about its exact parametrization, we should recover estimates $\smash{\hat{\Lambda}}$ from \eqref{eq:ber} that exactly match theory.
    \item \textbf{Recovery of original model weights $\hat{\theta}$: validation via explicit regularization.} Say we are interested in the implicit bias of an algorithmic or architectural feature that can be added to, or removed from, the training procedure. If we do not have any ``ground truth'' $\Lambda$ to work with, another way of validating our empirical estimates $\hat{\Lambda}$ is to remove (or ablate) the training feature of interest and retrain the original predictive model $f$ with an explicitly regularized objective, namely $\mathcal{L}(\theta) + \mathcal{R}(\theta, \hat{\Lambda})$. If $\mathcal{R}(\theta, \Lambda)$ accurately captures the effective regularization, then this should recover the same solution for the optimal weights $\hat{\theta}$.
\end{enumerate}

That is, the weights learned with implicit regularization are equivalent to the weights learned when early stopping is replaced with the estimated regularizer. While this doesn't necessarily mean that the estimated regularizer was used during training, it does mean that the two approaches are functionally equivalent  on the training set. The second approach has been employed in prior literature, e.g. \cite{weiImplicitExplicitRegularization2020}. Appendix A provides a detailed discussion of theoretical guarantees for recovering the original model weights. Intuitively, if the loss converges to a relatively flat minimum, local convexity guarantees that the solution is unique and we will recover the model.

\subsection{Regularizer identification}
\label{sec:identifiability}

\vspace{-1ex}
\paragraph{Which regularizers can be modeled?}
In order to characterize the implicit bias of some additive module $\mathcal{M}$ like an architecture (dropout layers) or algorithm (gradient descent/early stopping), you need some notion of training \emph{without} $\mathcal{M}$ -- we might say that it is \emph{ablatable} or amenable to ablation study, so that it may be replaced by an equivalent explicitly regularized problem. 

\vspace{-1ex}
\paragraph{Collinearity of gradients.}
A natural application of our method is to simultaneously fit a number of candidate regularizers with scale coefficients -- essentially turning the problem into linear regression over a set of regularizer gradients. That is:
\[
    \nabla_{\theta} \mathcal{L} = \vec{\lambda}^\top \nabla_{\theta} \vec{\mathcal{R}} = \lambda_1 \nabla_{\theta} \mathcal{R}_1 +  \ldots + \lambda_r \nabla_{\theta} \mathcal{R}_r
\]

The problem of accurately estimating the coefficient vector $\vec{\lambda}$ is more challenging when the gradients (i.e. the regression features) exhibit collinearity. 
In particular, if two regularizer gradients are perfectly collinear then the covariance matrix $(\nabla_{\theta} \mathcal{R}^T\nabla_{\theta} \mathcal{R})$ is singular and the OLS estimator is undefined. When the regularizer gradients are nearly collinear, the estimator is well-defined and unbiased, but has inflated variance, making estimation unstable. We elaborate and provide theoretical guarantees for both this parametric and the associated nonparametric case in Appendix \ref{app:pac}. There we show that estimation error scales in $O(\sqrt{r/p_{\operatorname{eff}}})$, where in the linear parametric case $r$ is the number of regularizers and  $p_{\operatorname{eff}}$ is the effective parameter count (this is analogous to effective sample size, since parameters are samples in our regression of $\nabla_{\theta} \mathcal{L}$ onto $\nabla_{\theta} \mathcal{R}$).

\vspace{-1ex}
\paragraph{Improving estimation with more diverse sampling.}
The approach described thus far only allows estimation of regularizers with dimension at most that of the predictive model,  \( p = \dim(\theta) \). A natural goal for improving estimation and enabling more complex parameterizations is to augment with additional estimating equations. Intuitively, adding more equations helps if they introduce additional regularizer gradients that do not overlap with existing ones (analagous to adding more i.i.d. data samples in regression). As mentioned above, estimation error scales in $O(\sqrt{r/p_{\operatorname{eff}}})$ -- adding more equations increases $p_{\operatorname{eff}}$, and correlation among gradients decreases $p_{\operatorname{eff}}$.

In an ideal setting, we get more equations by retraining the same model on a new dataset. The new training endpoint will have new weights $\hat{\theta}$ and therefore a new set of features $~\nabla_\theta \mathcal{R}|_{\hat\theta(D)}~$ and targets $~-\nabla_\theta \mathcal{L}({\theta},D)|_{\hat\theta(D)}~$ to concatenate to the regression. In synthetic data settings, we can sample new datasets that are independent, maximizing the amount of additional information obtained from each new sample. We will later demonstrate this works for estimating the regularization due to early stopping in Figure \ref{fig:ols-early-stopping}B. In realistic data settings, many datasets (especially those from the same domain, like images) are correlated and therefore will lead to more similar $\hat{\theta}$ -- but even sampling variability in $\smash{\hat{\theta}}$ will improve our coverage.

Another natural way to obtain additional estimating equations, when constrained to a single training dataset, is through bootstrap resampling: each bootstrap replicate yields a new fitted parameter vector and thus a new approximate stationarity condition, expanding the gradient equations available for estimating \(\Lambda\). 
Moreover, one need not resample uniformly -- it is also possible to weight data samples in a way that stabilizes estimation, like upweighting samples for which the gradients are less collinear. See Appendix \ref{app:bootstrapping} for more detail and the bootstrapping algorithm for this case, as well as an example of bootstrapping applied to early stopping in OLS, as studied in Section \ref{sec:gd-ols}.

\vspace{-1ex}
\paragraph{Estimating over trajectories.}
When training models, we can compute the loss and regularizer gradients at any point during training.
While Equation \ref{eq:stationarity} will hold most closely as the model training converges, there is nothing to stop us from running our procedure earlier on in training.
This opens two possible targets for estimation:
\begin{enumerate}
    \item \textbf{Constant $\Lambda$}. Assuming that the regularizer function does not change over time, the trajectory can be used to augment the estimation of Equation \ref{eq:trajectory-gradient-matching} by contributing more gradient-matching terms -- much like bootstrap replicates. See Appendix \ref{app:single-trajectory} for a detailed analysis.
    \item \textbf{Time-varying $\Lambda^{(t)}$}. It is also possible that the implicit regularization changes as a function of time (or optimization steps). Then, each point along the trajectory represents a separate estimation problem. This allows for tracking how the implicit regularization evolves over the course of training, which is much more general than simply studying a model at convergence. This framing is potentially more interesting to researchers studying training dynamics rather than generalization. One can employ either of Equations \ref{eq:grad-matching} or \ref{eq:trajectory-gradient-matching} at each timepoint, but the hypothetical stationarity assumed for \ref{eq:grad-matching} is probably a poor approximation early on in training. Section \ref{sec:gd-ols} studies a time-varying quadratic weight penalty induced by gradient descent for OLS.
\end{enumerate}

\vspace{-1ex}
\section{Empirical Studies}
\vspace{-1ex}
The following experiments validate we can recover both known explicit regularization (elastic net) and theorized implicit regularization (from early stopping, dropout, and discrete GD steps).

\begin{figure}[h!]
    \centering
    \includegraphics[width=\linewidth]{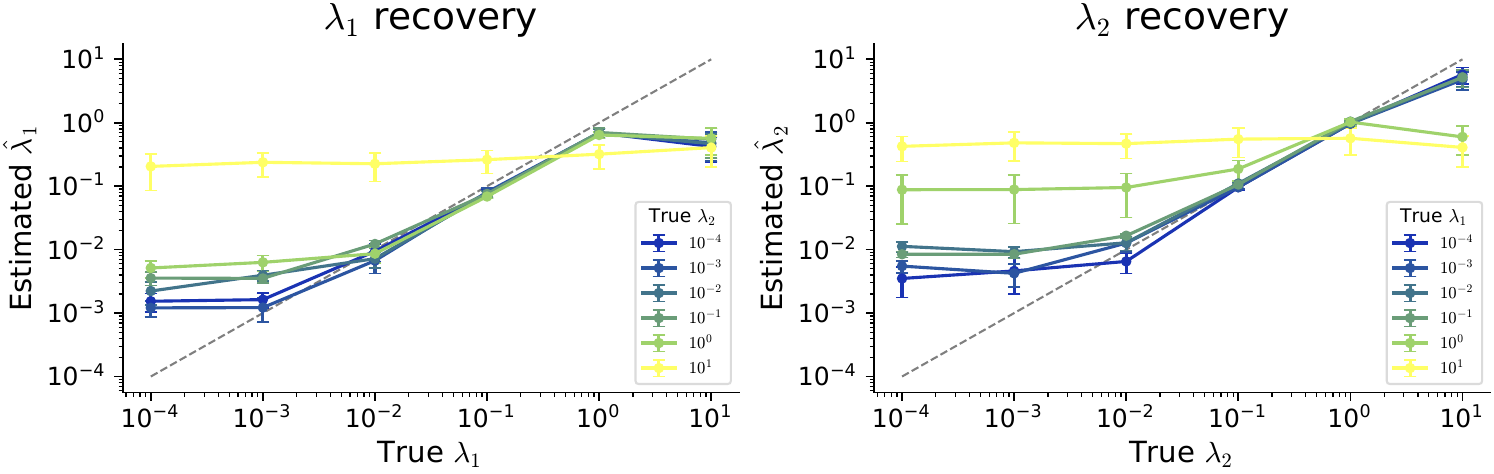}
    \caption{\textbf{Recovering explicit regularization.} Elastic-net recovery at $\beta=10^{-3}$ on a $6\times 6$ grid of true $(\lambda_1,\lambda_2)$ with $10$ dataset resamples per cell. Each panel plots the estimated $\hat\lambda_i$ against the true $\lambda_i$ on log--log axes; lines show the mean across seeds and error bars give $\pm 1$ standard error. \textbf{Left:} $\hat\lambda_1$ vs.\ true $\lambda_1$, one line per value of true $\lambda_2$. \textbf{Right:} $\hat\lambda_2$ vs.\ true $\lambda_2$, one line per value of true $\lambda_1$. The dashed diagonal marks perfect recovery.}
    \label{fig:elasticnet-recovery}
    \vspace{-1ex}
\end{figure}

\vspace{-1.5ex}
\paragraph{Explicit elastic net regularization.}
\label{sec:elastic-net}

In order to sanity-check our approach, we first tasked our framework with recovering the parameters of a neural network explicitly trained with regularization.
We trained deep ReLU networks with various combinations of $\ell_1$ and $\ell_2$ regularization (also known as elastic net).

Figure \ref{fig:elasticnet-recovery} shows the recovered penalties as a function of the true penalties.
Note this is a difficult task because smoothing $\ell_1$ near zero makes the two norms indistinguishable for very small weights ($\leq 10^{-3}$). Consequently, large penalties (which drive many weights towards zero) reduce identifiability. For very small penalties, the regularization has little effect on the optimization procedure and there is not much signal for estimation. For intermediate values of the the penalty parameters, however, recovery is quite accurate.


\begin{figure}
    \centering
    \includegraphics[width=\linewidth]{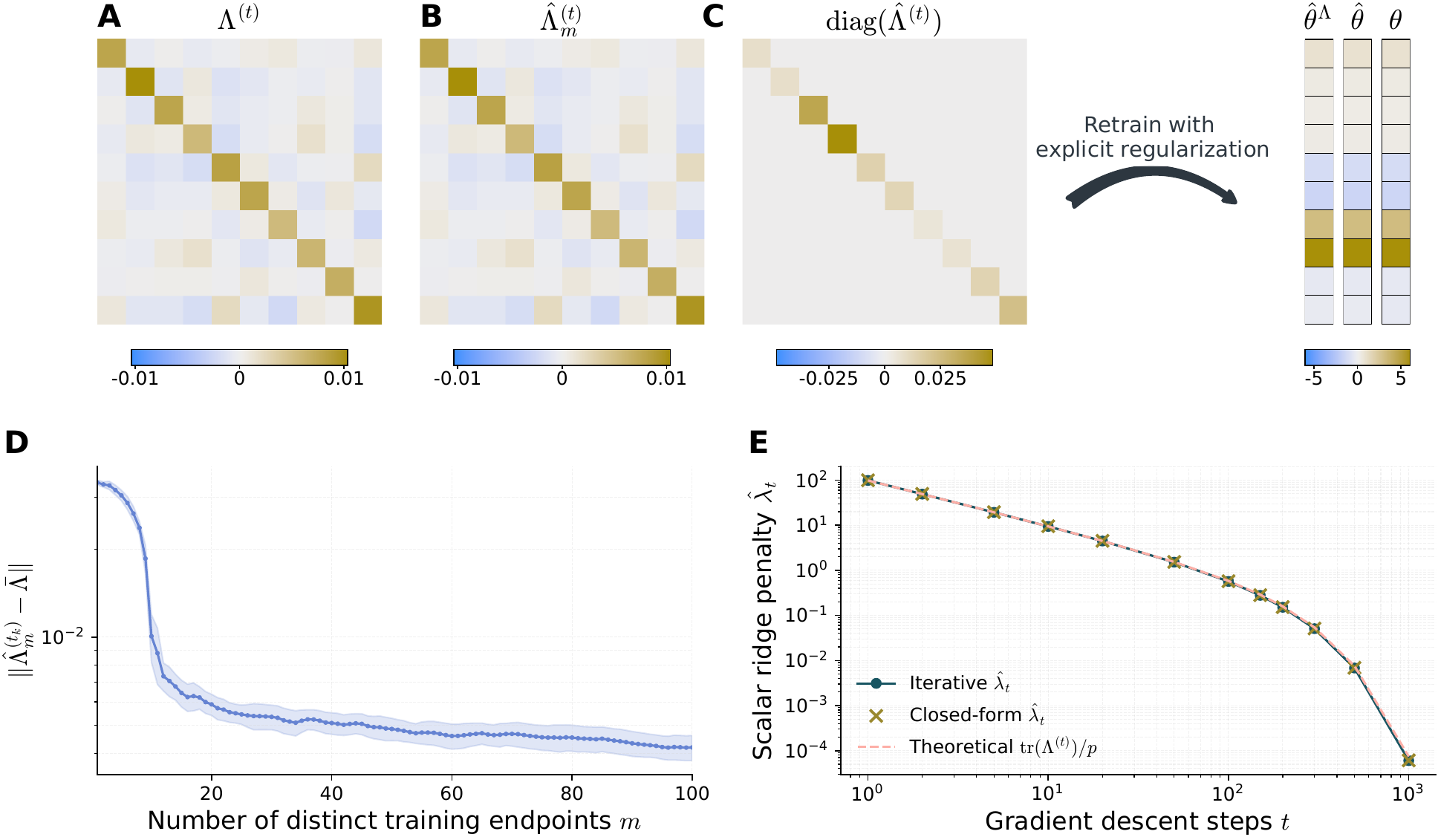}
    \caption{
        \textbf{Implicit regularization due to early stopping.}
        \textbf{(A)} The theoretical regularization matrix $\Lambda^{(t)}$ predicted by Ali et al.\ at the early-stopped iterate $t=500$.
        \textbf{(B)} The full symmetric estimator $\hat{\Lambda}_m^{(t)}$ using $m=10$ endpoints, each trained to the same fixed iterate $t=500$. 
        \textbf{(C)} The under-parametrized diagonal estimator $\mathrm{diag}(\hat{\Lambda}^{(t)})$ fit from a \emph{single} endpoint. The regularized solution $\hat{\theta}^\Lambda$ under $\mathrm{diag}(\hat{\Lambda}^{(t)})$ matches the original early-stopped trained weights $\hat\theta$, and the ground-truth coefficients $\theta$. 
        \textbf{(D)} Convergence of the symmetric-matrix estimator under per-endpoint early stopping: $\|\hat{\Lambda}_m^{(t_k)} - \bar{\Lambda}\|$ as the number of endpoints $m$ grows from $1$ to $100$, where $\bar{\Lambda}$ is the median of the endpoint-specific matrices $\Lambda^{(t_k)}$. Each endpoint is stopped individually when its training loss plateaus, so stop steps $t_k$ vary. Band shows 95\% CI. 
        \textbf{(E)} The single-parameter heuristic estimate $\hat\lambda_t$ tracked over gradient descent steps $t$.
    }
    \label{fig:ols-early-stopping}
    \vspace{-1em}
\end{figure}

\vspace{-1.5ex}
\paragraph{Early stopping in OLS.}
\label{sec:gd-ols}
It is nice to recover known explicit regularization, but the real goal is to estimate \textit{implicit} regularization that is not in our objective function during training. To that end we study a simple example with well-understood theory: the effect of early stopping in full-batch gradient descent for ordinary least-squares linear regression \citep{santosEquivalenceRegularizationTruncated1996, yaoEarlyStoppingGradient2007, ali2019continuous}. We analyze synthetic data generated from a noisy linear model $y = X \theta + \varepsilon$, where $X \in \R^{1000 \times 10}$ has entries $X_{ij} \stackrel{\text{iid}}{\sim} \N(0,1)$, coefficients $\theta_i \stackrel{\text{iid}}{\sim} \N(0, 3^2)$, and noise $\varepsilon_i \stackrel{\text{iid}}{\sim} \N(0,1)$.

\citet{ali2019continuous} predicts that the implicit effect of early stopping is a form of data-dependent quadratic regularization that decays over training.
Consider GD with step size $\eta$ on the OLS objective, starting at $\hat\theta^{(0)} = 0$, and assume $\eta< 1/\|X^\top X\|_{\op}$. Let $\tfrac{1}{n} X^\top X = V S V^\top$ be the singular value decomposition.  
For each $t = 1,2,3,\dots$, the iterate $\hat{\theta}^{(t)}$ from step $t$ in gradient descent uniquely solves
\begin{equation}
\label{eq:early-stopping-theory}
    \min_{\theta \in \mathbb{R}^p} \; \frac{1}{n}\|y - X\theta\|_2^2 + \theta^\top \Lambda^{(t)} \theta, \quad\quad
    \Lambda^{(t)} = V S\big((I - \eta S)^{-t} - I\big)^{-1} V^\top
\end{equation}

To recover this implicit regularization exactly, we need to learn the matrix $\Lambda^{(t)}$, which has $p(p+1)/2$ free parameters (it is symmetric and PSD).
We only have $p$ equations in the system of equations from a single training endpoint, posing a challenge for recovering the full matrix. 

We present two approaches in Figure \ref{fig:ols-early-stopping} and show how they lead to the two modes of success described in Section \ref{sec:eval}: recovery of hypothesized $\Lambda$ (\ref{fig:ols-early-stopping}B) and recovery of the original model weights $\hat{\theta}$ (\ref{fig:ols-early-stopping}C).  
Figure \ref{fig:ols-early-stopping}A shows the quadratic weight penalty theoretically induced by early stopping according to Eq. \ref{eq:early-stopping-theory}. In \ref{fig:ols-early-stopping}B, we show that concatenating training endpoints from identical models trained on $m$ distinct datasets (as discussed in Section \ref{sec:identifiability}) provides enough coverage to recover the full matrix exactly. Estimation error falls off steeply as a function of $m$ (see \ref{fig:ols-early-stopping}D). In \ref{fig:ols-early-stopping}C, we use just a single training endpoint and force $\hat{\Lambda}^{(t)}$ to be diagonal so that it has only $p$ parameters. We show that, though the underparametrized version diverges from the matrix predicted by theory, using it as an explicit penalty in the regression \emph{without} early stopping yields exactly the same weights ($\hat{\theta}^\Lambda$) as the original unregularized model \emph{with} early stopping ($\hat{\theta}$).

We next asked if an underparametrized model of implicit regularization could still be interpretable. After all, the true implicit regularization in a complex network might be nearly impossible to write in closed form. We may only be able to fit a heuristic model that captures some aspect or summary of its myriad effects. 
To test this, we fit a model with a scalar $\ell_2$ regularization parameter, i.e. $\Lambda_t = \lambda_t I$. Our theory of early-stopping in OLS tells us that $\lim_{t\to \infty} \alpha^\top \Lambda_t \alpha \to 0$ because the matrix $\left((I-\eta S)^{-t}-I\right)^{-1}$ vanishes for large $t$ (given that eigenvalues $s_i$ are positive and $\eta s_i < 1$). We should be able to capture this behavior with our heuristic model; that is, if the theory is correct we should see $\hat{\lambda}_t \to 0$.
Figure \ref{fig:ols-early-stopping}E shows that our heuristic estimate decays over time, with an extremely tight match to theory. 
We take this as an encouraging sign that we can use underparameterized models of implicit regularization to test our theory and intuitions.


\begin{figure}
    \centering
    \includegraphics[width=0.9\linewidth]{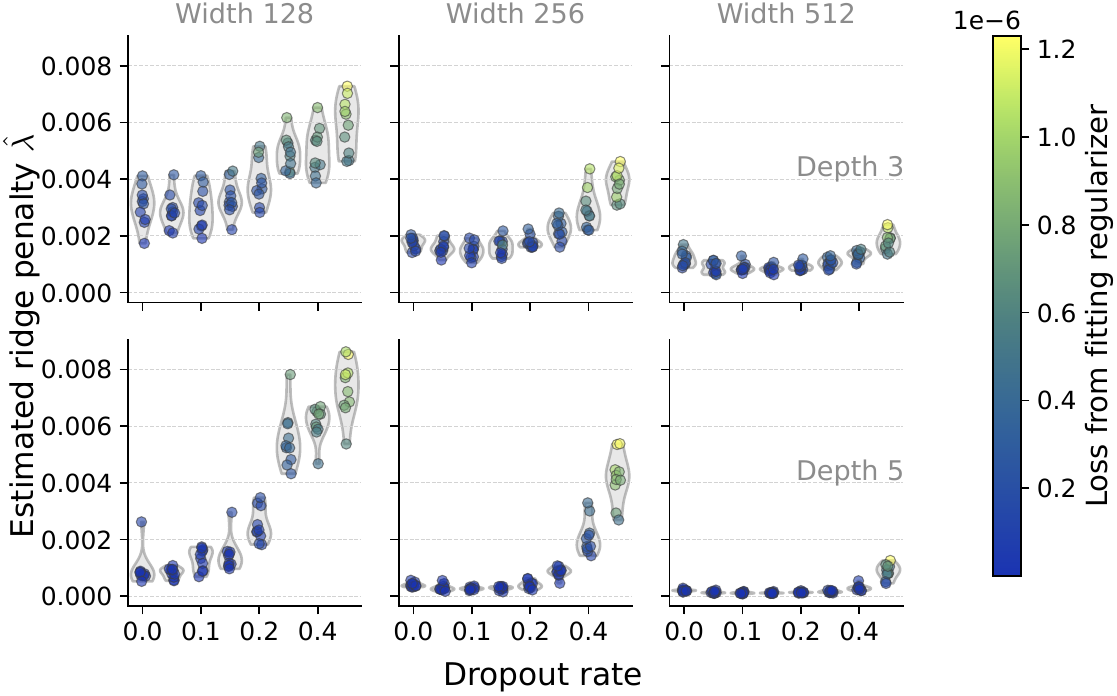}
    \caption{\textbf{Implicit regularization of dropout.} Estimated $\ell_2$ regularization strength ($\hat{\lambda}$) as a function of dropout rate for MNIST classifiers. Each point is one seed; columns vary width, rows vary depth. Color indicates gradient-matching loss (darker = better fit). The monotonic increase with dropout rate is consistent with the theoretical prediction that dropout acts as adaptive weight decay.}
    \label{fig:dropout-l2}
    \vspace{-1.5em}
\end{figure}

\vspace{-1.5ex}
\paragraph{Dropout.}
\label{sec:dropout}
Dropout \citep{srivastavaDropoutSimpleWay2014} randomly zeroes out hidden units during training and is broadly understood to act as an adaptive regularizer that penalizes over-reliance on specific pathways. Several theoretical analyses have shown that, for linear models, dropout is akin to adaptive $\ell_2$ regularization whose strength depends on the dropout rate and the second moments of the activations \citep{wagerDropoutTrainingAdaptive2013, weiImplicitExplicitRegularization2020}. Here we trained a series of neural network classifiers on MNIST \cite{lecun2010mnist} with varying levels of dropout.

Figure~\ref{fig:dropout-l2} shows the estimated ridge penalty $\hat{\lambda}$ as a function of dropout rate, faceted by depth and width. When all other hyperparameters are held equal, the estimated $\ell_2$ penalty increases monotonically with dropout rate across all architectures, consistent with the theoretical prediction that dropout acts as adaptive $\ell_2$ regularization. The effect is most pronounced for narrower networks, where the per-weight regularization pressure is highest. For wider networks, the absolute magnitude of the estimated penalty decreases but the trend persists.

\vspace{-1.5ex}
\paragraph{Gradient step size.}
\label{sec:igr}
\citet{barrettImplicitGradientRegularization2022} describe how the step size ($\eta$) of discrete gradient descent implicitly regularizes models by scaling a penalty on trajectories with large loss gradients.
Specifically, they propose that gradient descent in $\R^p$ minimizing a loss $\mathcal{L}$ deviates from the exact continuous path of gradient flow. It is actually closer to the exact continuous path of a modified loss
$\mathcal{L}(\theta) + \mathcal{R}(\theta, \lambda)$
where $\lambda \equiv \frac{\eta \cdot p}{4}$
and \(\mathcal{R} (\theta, \lambda) \equiv \frac{\lambda}{p} \sum_{i=1}^p \left( \nabla_{\theta_i} \mathcal{L}(\theta)\right)^2\).

Figure \ref{fig:igr} uses this setting as a known-bias recovery check. Rather than treating $\lambda = \eta p/4$ as given, we estimate $\lambda$ from trajectory deviations and compare $\hat\lambda$ to the analytic value derived by \citet{barrettImplicitGradientRegularization2022}. In this experiment, Eq. \ref{eq:trajectory-gradient-matching} is instantiated with the continuous gradient-flow path of L as the unregularized reference, rather than the Euler step $-\eta\nabla \mathcal{L}$; Appendix \ref{app:igr-methods} details how we compute this reference so that the residual captures the finite-step IGR correction. We also reproduce the qualitative Figure 2 relationships between stronger implicit gradient regularization, smaller averaged squared gradients, and higher test accuracy. 

This application shows that our method can be used to estimate regularization in the \emph{dynamics} of learning along the training trajectory, including early on in training. This supplements our earlier applications estimating regularization in models near convergence. 

\begin{figure}
    \centering
    \includegraphics[width=\linewidth]{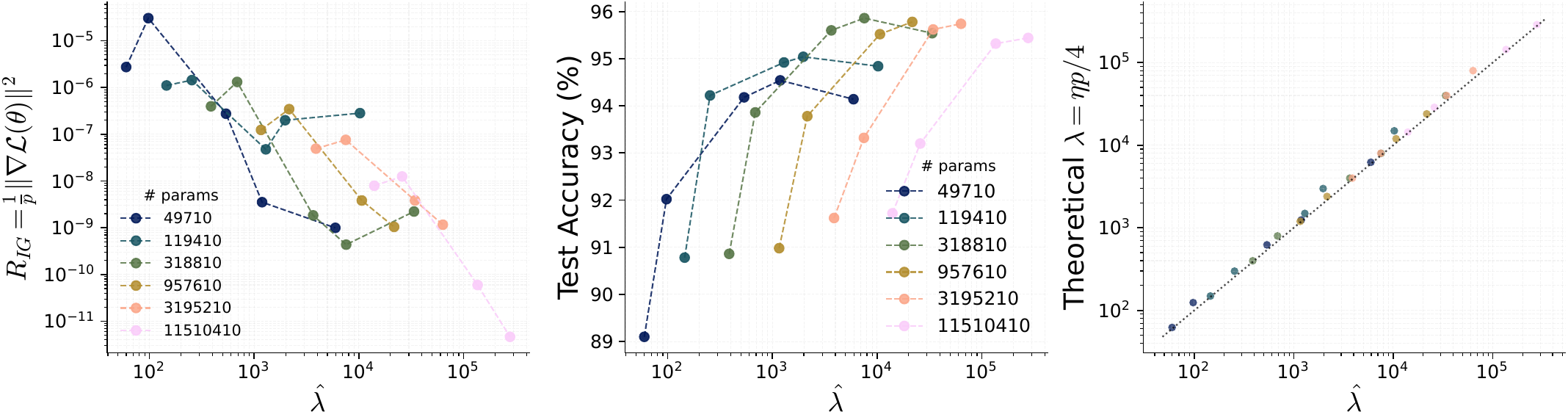}
    \caption{\textbf{Recovering implicit gradient regularization from discrete gradient steps.} \citet{barrettImplicitGradientRegularization2022} derived $\lambda=\eta p/4$ for the gradient-penalty regularizer induced by discrete GD. \textbf{Left:} Value of $R_{IG}$, the averaged squared gradient, against the estimated scaling $\hat{\lambda}$. \textbf{Middle:} Test accuracy against $\hat{\lambda}$. \textbf{Right:} Direct recovery check comparing $\hat{\lambda}$ to the theoretical value $\eta p/4$.}
    \label{fig:igr}
    \vspace{-1em}
\end{figure}



\vspace{-1ex}
\section{Conclusions}
\vspace{-1ex}
We have shown that it is possible to estimate the regularization implicitly embedded in a learning system. We accomplish this by assuming that a system jointly optimizes the empirical loss \emph{and} a regularizer. This assumption implies a relationship among the empirical loss gradients, regularizer gradients, and observed weight updates that permits solving for unknown regularization parameters. We demonstrate that this approach can recover explicit regularization as well as theoretically predicted implicit regularization.

Our approach, as it ultimately is an instance of parametric statistical modeling, has familiar limitations. In particular,  our capacity to fit models with many parameters is bound to our effective sample size. In this setup, sample size corresponds to the number of equations given by gradient matching (typically the number of parameters $p$ in the original predictive model). In Section \ref{sec:identifiability} we discussed ways to increase the sample size, but ultimately they may be limited by the data at hand. We also face familiar identifiability problems -- regularization parameters associated with collinear gradients yield estimates with high variance. It is difficult to distinguish reliably between classes of regularizers with similar gradients. In Appendix \ref{app:pac} we give PAC bounds for both the parametric and nonparametric cases. Additionally, we frame implicit regularization as a phenomenon which can be replaced by an equivalent explicit regularizer, when estimated correctly -- but in some cases it might not be feasible or sensible to replace the training technique being studied. However, many deep learning techniques can be entirely ablated, making them amenable to study.


This paradigm opens exciting avenues for future research into the implicit regularization effects of the many complex innovations and engineering tricks used in state of the art deep learning systems. Practitioners often develop training techniques in a mostly ad-hoc manner, greedily testing various training configurations and advocating for techniques that empirically improve generalization on available validation datasets. This clearly works in the long run, as the impressive evolution of deep learning proves. However, as model size and compute budgets scale, training runs become massively expensive. Estimating implicit regularization could provide a relatively cheap and practical way for practitioners to study the effects of various training configurations early on in training, by studying how the dynamics deviate from strict loss minimization. This is another axis along which to evaluate models, which would complement existing heuristics like validation loss, weight and gradient norms, or perplexity. We consider our work here to be just a starting point for empirically characterizing the implicit regularization in deep learning systems, which we believe to be essential to their capacity to generalize beyond their training data.



\newpage

\bibliographystyle{plainnat}
\bibliography{ref}

\newpage
\appendix

\section{Uniqueness and recovery with an estimated explicit regularizer}
\label{app:unique-weights}

In this appendix we characterize how retraining a model with an explicit regularizer $\mathcal{R}$ with estimated parameters $\hat{\Lambda}$ can often recover the original predictive model parameters $\hat{\theta}$ induced by implicit regularization. 
This was motivated by a surprising observation in OLS: fitting explicitly regularized least-squares with a diagonal version of the implicit regularization matrix reproduced the same weights as unregularized early stopping. So not only did an explicit regularizer reproduce the predicted implicit effects, but even an \emph{misspecified} (underparameterized) explicit regularizer reproduced the same effects!

Here we show that under various convexity and curvature assumptions, fitting $\hat{\Lambda}$ such that the resulting regularized loss is minimized at $\hat{\theta}$ actually produces a unique (or at least locally unique) solution. Therefore, retraining a model on that regularized loss recovers exactly $\hat{\theta}$. The most lenient of these assumptions -- that the regularized objective has positive definite Hessian at $\hat{\theta}$ -- is quite plausible in our estimation. A rich literature suggests that flatter minima have desirable properties which lead to improved generalization \cite{hochreiterFlatMinima1997, kaurMaximumHessianEigenvalue2022}, and consequently many modern training techniques (like choice of batch size \cite{keskarLargeBatchTrainingDeep2017}) optimize for flatter minima either intentionally \cite{foretSharpnessAwareMinimizationEfficiently2021, zhangGradientNormAware2023, chaudhariEntropySGDBiasingGradient2017} or by happy coincidence.  

Fix an estimated hyperparameter $\hat\Lambda$. Let \(\mathcal{L}:\mathbb R^p\to\mathbb R\) be \(C^2\), and suppose that for the fixed
\(\hat\Lambda\), the map \(\mathcal{R}: \R^p \to \R \) is \(C^2\).

Define
\[
    f(\theta) \coloneqq \mathcal{L}(\theta) + \mathcal{R}(\theta,\hat\Lambda),
    \qquad
    \mathcal{F}(\theta) \coloneqq \nabla f(\theta).
\]
Assume unconstrained optimization over $\Theta=\mathbb{R}^p$.

\paragraph{Local uniqueness via local convexity.}
\begin{proposition}[PD Hessian $\Rightarrow$ isolated stationary point]
    \label{prop:local}
    If $\mathcal{F}(\hat\theta)=0$ and $\nabla^2 f(\hat\theta)\succ0$, then:
    (i) $\hat\theta$ is a strict local minimizer of $f$;
    (ii) there exists $r>0$ such that $\hat\theta$ is the unique zero of $\mathcal{F}$ in $B_r(\hat\theta)$.
\end{proposition}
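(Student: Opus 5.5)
The plan is to prove both parts from the second-order Taylor expansion of $f$ around $\hat\theta$, together with continuity of the Hessian, which holds because $f=\mathcal{L}+\mathcal{R}(\cdot,\hat\Lambda)$ is $C^2$. Write $H\coloneqq\nabla^2 f(\hat\theta)\succ0$ and let $\mu>0$ be its smallest eigenvalue. Since $\theta\mapsto\nabla^2 f(\theta)$ is continuous, I will choose $r>0$ so that $\|\nabla^2 f(\theta)-H\|_{\op}\le \mu/2$ for all $\theta\in B_r(\hat\theta)$. Weyl's inequality then gives $\nabla^2 f(\theta)\succeq (\mu/2) I$ on the whole ball. This uniform bound is the one key step, and both claims follow from it.

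For (i), take any $\theta\in B_r(\hat\theta)$ and write $h=\theta-\hat\theta$. Taylor's theorem with integral remainder gives
\[
f(\theta)=f(\hat\theta)+\mathcal{F}(\hat\theta)^\top h+\int_0^1(1-s)\,h^\top\nabla^2 f(\hat\theta+sh)\,h\,ds .
\]
The segment from $\hat\theta$ to $\theta$ lies in the convex ball, and $\mathcal{F}(\hat\theta)=0$. The lower bound on the Hessian therefore yields $f(\theta)\ge f(\hat\theta)+\tfrac{\mu}{4}\|h\|^2$. This is strictly greater than $f(\hat\theta)$ whenever $h\ne0$, so $\hat\theta$ is a strict local minimizer.

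For (ii), suppose $\theta\in B_r(\hat\theta)$ satisfies $\mathcal{F}(\theta)=0$. Apply the fundamental theorem of calculus to $\mathcal{F}$ along the segment:
\[
0=\mathcal{F}(\theta)-\mathcal{F}(\hat\theta)=\Big(\int_0^1\nabla^2 f(\hat\theta+sh)\,ds\Big)h .
\]
The averaged matrix is symmetric with every eigenvalue at least $\mu/2$. Pairing the identity with $h$ gives $0=h^\top(\cdot)h\ge\tfrac{\mu}{2}\|h\|^2$, so $h=0$. Equivalently, $f$ is strongly convex on $B_r(\hat\theta)$, so it has at most one stationary point there. An alternative route is the inverse function theorem applied to $\mathcal{F}$: this map is $C^1$ with invertible Jacobian $H$ at $\hat\theta$, so it is locally injective.

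The only real obstacle is moving from positive definiteness at the single point $\hat\theta$ to a uniform bound on a neighborhood. This is exactly where the $C^2$ hypothesis is used. It is handled by the continuity-plus-Weyl argument above, or equivalently by continuity of $\lambda_{\min}$. Everything after that is routine.
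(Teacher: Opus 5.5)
Your proof is correct and follows the same route as the paper. The key step in both is that continuity of the Hessian turns $\nabla^2 f(\hat\theta)\succ 0$ into a uniform bound $\nabla^2 f\succeq cI$ on a ball $B_r(\hat\theta)$, i.e.\ local strong convexity, which then gives both the strict local minimum and the uniqueness of the zero of $\mathcal{F}$; you spell out explicitly (integral-remainder Taylor expansion, averaged-Hessian identity) the strong-convexity consequences that the paper invokes in one line.
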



\begin{proof}
    Let \(H(\theta)=\nabla^2 f(\theta)\). Since \(H\) is continuous and
    \(H(\hat\theta)\succ0\), there exist \(\mu>0\) and \(r>0\) such that
    \(H(\theta)\succeq \mu I\) for all \(\theta\in B_r(\hat\theta)\).
    Hence \(f\) is \(\mu\)-strongly convex on the convex ball \(B_r(\hat\theta)\).
    Because \(\nabla f(\hat\theta)=0\), \(\hat\theta\) is the minimizer of \(f\) on
    \(B_r(\hat\theta)\), and strong convexity makes this minimizer unique.
    Therefore \(\hat\theta\) is a strict local minimizer and the unique zero of
    \(\nabla f\) in \(B_r(\hat\theta)\).
\end{proof}

\paragraph{Global uniqueness: two sufficient routes.}
\begin{proposition}[Strong convexity]
    \label{prop:global-strong}
    Assume $f$ is $\mu$-strongly convex on $\mathbb{R}^p$ (i.e., $\nabla^2 f(\theta)\succeq \mu I$ for all $\theta$),
    then $f$ has a unique global minimizer $\theta^*$ and $\mathcal{F}(\theta)=0$ has the unique solution $\theta=\theta^*$.
\end{proposition}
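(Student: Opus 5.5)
I would split the statement into two parts: existence of a global minimizer, and uniqueness of zeros of $\mathcal{F}$. Both rest on a single inequality, strong monotonicity of the gradient. Because $f$ is $C^2$ with $\nabla^2 f \succeq \mu I$ everywhere, I will integrate the Hessian along the segment $\theta_s = \theta' + s(\theta-\theta')$. This gives
\[
\langle \mathcal{F}(\theta)-\mathcal{F}(\theta'),\, \theta-\theta'\rangle = \int_0^1 (\theta-\theta')^\top \nabla^2 f(\theta_s)(\theta-\theta')\,ds \;\ge\; \mu\|\theta-\theta'\|^2 .
\]
By the same Taylor argument with integral remainder, I also get the quadratic lower bound $f(\theta) \ge f(\theta') + \langle \mathcal{F}(\theta'),\theta-\theta'\rangle + \tfrac{\mu}{2}\|\theta-\theta'\|^2$.

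\textbf{Existence.} I apply the quadratic lower bound with $\theta'=0$. Using Cauchy--Schwarz,
\[
f(\theta)\ge f(0) - \|\mathcal{F}(0)\|\,\|\theta\| + \tfrac{\mu}{2}\|\theta\|^2,
\]
so $f$ is coercive. The sublevel set $\{f\le f(0)\}$ is therefore closed (by continuity of $f$) and bounded, hence compact. Weierstrass then gives a minimizer $\theta^*$ of $f$ on this set, and it is a global minimizer of $f$ on $\mathbb{R}^p$. Since the domain is unconstrained and $f$ is differentiable, the first-order condition gives $\mathcal{F}(\theta^*)=0$.

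\textbf{Uniqueness.} Suppose $\mathcal{F}(\theta)=0$. Applying the monotonicity inequality to $\theta$ and $\theta^*$ gives $0 \ge \mu\|\theta-\theta^*\|^2$, so $\theta=\theta^*$. Any other global minimizer is also a stationary point, so it must equal $\theta^*$ as well. This shows that $\theta^*$ is both the unique global minimizer and the unique solution of $\mathcal{F}(\theta)=0$.

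The only step that needs care is existence, because strong convexity alone does not hand us a minimizer on the non-compact space $\mathbb{R}^p$. The coercivity bound above is what supplies compactness. Everything else is routine, and Proposition~\ref{prop:local} is not needed, although it is consistent with this result, since $\nabla^2 f(\theta^*)\succ 0$.
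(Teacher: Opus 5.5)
Your proof is correct and matches the paper's argument for existence: the quadratic lower bound at $0$ gives coercivity, and hence a global minimizer. The one variation is that you get uniqueness from strong monotonicity of $\mathcal{F}$ rather than from strict convexity of $f$. This has the small benefit of showing directly that every zero of $\mathcal{F}$ equals $\theta^*$, a step the paper leaves implicit: it only notes that the unique minimizer is stationary, and tacitly uses the convex-analysis fact that stationary points are global minimizers.
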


\begin{proof}
    Strong convexity implies the quadratic lower bound
    $f(\theta)\ge f(0)+\langle \nabla f(0),\theta\rangle + \tfrac\mu2\|\theta\|^2$,
    hence $f(\theta)\to\infty$ as $\|\theta\|\to\infty$ (coercivity) and thus existence of a minimizer.
    Strong convexity also yields strict convexity, so the minimizer is unique and must satisfy $\nabla f(\theta^*)=0$.
\end{proof}

\begin{proposition}[Convexity + PD Hessian at the candidate minimizer]
    \label{prop:global-convex}
    Let $f$ be convex on $\mathbb{R}^p$ and suppose $\mathcal{F}(\hat\theta)=0$ with $\nabla^2 f(\hat\theta)\succ0$.
    Then $\hat\theta$ is the unique global minimizer of $f$.
\end{proposition}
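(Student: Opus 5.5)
The plan is to combine the local result of Proposition~\ref{prop:local} with the global structure that convexity provides. There are two facts to establish: $\hat\theta$ is a global minimizer, and no other point attains the minimum value.

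\textbf{Step 1: $\hat\theta$ is a global minimizer.} For a differentiable convex $f$ the first-order characterization gives
\[
f(\theta)\ge f(\hat\theta)+\langle \nabla f(\hat\theta),\theta-\hat\theta\rangle = f(\hat\theta)
\]
for all $\theta\in\mathbb{R}^p$, because $\mathcal{F}(\hat\theta)=0$. So $\hat\theta$ minimizes $f$ globally. This step does not use the Hessian assumption.

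\textbf{Step 2: uniqueness.} Suppose for contradiction that some $\theta'\neq\hat\theta$ satisfies $f(\theta')=f(\hat\theta)$. The set of minimizers of a convex function is convex, so every point $\theta_s=\hat\theta+s(\theta'-\hat\theta)$ with $s\in[0,1]$ is also a global minimizer. This follows from $f(\theta_s)\le (1-s)f(\hat\theta)+sf(\theta')=f(\hat\theta)$ combined with Step 1.

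Now invoke Proposition~\ref{prop:local}(i): $\hat\theta$ is a strict local minimizer. Hence there is $r>0$ with $f(\theta)>f(\hat\theta)$ for all $\theta\in B_r(\hat\theta)\setminus\{\hat\theta\}$. For sufficiently small $s>0$ the point $\theta_s$ lies in this punctured ball and yet has $f(\theta_s)=f(\hat\theta)$, a contradiction.

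Part (ii) of Proposition~\ref{prop:local} gives the same contradiction directly. Each $\theta_s$ minimizes the differentiable function $f$ on $\mathbb{R}^p$, so $\mathcal{F}(\theta_s)=0$, which conflicts with $\hat\theta$ being the unique zero of $\mathcal{F}$ in $B_r(\hat\theta)$.

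Consequently $\hat\theta$ is the unique global minimizer. Every stationary point of a convex differentiable function is a global minimizer, so $\hat\theta$ is also the unique zero of $\mathcal{F}$.

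\textbf{Main obstacle.} There is no serious obstacle. The only conceptual point is that positive definiteness at a single point does not give global strict convexity. The proof gets around this by using convexity of the minimizer set to transport any competing minimizer into the neighborhood where Proposition~\ref{prop:local} applies. A Taylor-expansion argument along the segment would also work: $g(s)=f(\theta_s)$ is convex with $g''(0)=(\theta'-\hat\theta)^\top\nabla^2 f(\hat\theta)(\theta'-\hat\theta)>0$, yet it is constant on $[0,1]$. The convex-set route is cleaner.
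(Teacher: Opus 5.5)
Your proof is correct and follows the same route as the paper: stationarity plus convexity gives global minimality, and a second minimizer $\theta'$ would force $f$ to be constant on the segment $[\hat\theta,\theta']$, which contradicts positive curvature at $\hat\theta$. The only difference is how you close the contradiction. You use the strict local minimality from Proposition~\ref{prop:local}, which relies on continuity of the Hessian, whereas the paper argues directly that the directional second derivative $d^\top\nabla^2 f(\hat\theta)\,d$ would vanish; that is exactly your ``alternative'' Taylor argument.
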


\begin{proof}
    For convex $f$, every stationary point is a global minimizer.
    Assume, for contradiction, there is $\theta'\neq \hat\theta$ with $f(\theta')=f(\hat\theta)$.
    By convexity, $f$ is constant along the segment $[\hat\theta,\theta']$,
    which implies zero directional second derivative at $\hat\theta$ in the direction $d=\theta'-\hat\theta$.
    But $\nabla^2 f(\hat\theta)\succ0$ gives $d^\top \nabla^2 f(\hat\theta) d>0$ for any $d\neq0$, a contradiction.
\end{proof}

So what criteria are sufficient to exactly recover the original weights?
Suppose a training procedure (with implicit regularization) produces weights $\hat\theta$.
If we fit $\hat\Lambda$ so that \emph{stationarity at $\hat\theta$ holds}:
\[
    \nabla \mathcal{L}(\hat\theta) + \nabla \mathcal{R}(\hat\theta,\hat\Lambda) = 0,
\]
then:
\begin{itemize}
    \item By Proposition~\ref{prop:global-strong}, if $f$ is strongly convex on $\mathbb{R}^p$, retraining on $f$ recovers $\hat\theta$ uniquely.
    \item By Proposition~\ref{prop:global-convex}, the same holds if $f$ is convex and $\nabla^2 f(\hat\theta)\succ0$.
    \item By Proposition~\ref{prop:local}, even without convexity, there exists \(r>0\)
          such that \(\hat\theta\) is the unique stationary point in \(B_r(\hat\theta)\).
          Hence any retraining method that converges to a stationary point lying in
          \(B_r(\hat\theta)\) must recover \(\hat\theta\).
\end{itemize}

\paragraph{Stability under approximate gradient matching.}
\begin{proposition}[Perturbation bound under strong convexity]
    \label{prop:stab}
    Assume $f$ is $\mu$-strongly convex and let $\theta^*=\arg\min f$.
    If $\|\mathcal{F}(\hat\theta)\|\le \varepsilon$, then
    \[
        \|\hat\theta-\theta^*\| \le \frac{\varepsilon}{\mu}
        \quad\text{and}\quad
        f(\hat\theta)-f(\theta^*) \le \frac{\varepsilon^2}{2\mu}.
    \]
\end{proposition}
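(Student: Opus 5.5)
The plan is to use the two standard consequences of $\mu$-strong convexity: strong monotonicity of the gradient, and a quadratic upper bound on the suboptimality gap (the Polyak--\L{}ojasiewicz inequality). Existence and uniqueness of $\theta^*$ and $\mathcal{F}(\theta^*)=0$ come directly from Proposition~\ref{prop:global-strong}.

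For the distance bound, I will first establish strong monotonicity of $\mathcal{F}$:
\[
\langle \mathcal{F}(\theta)-\mathcal{F}(\theta'),\,\theta-\theta'\rangle \ge \mu\|\theta-\theta'\|^2 .
\]
Since $f$ is $C^2$ with $\nabla^2 f\succeq\mu I$, this follows by writing $\mathcal{F}(\theta)-\mathcal{F}(\theta')=\int_0^1 \nabla^2 f(\theta'+s(\theta-\theta'))\,ds\,(\theta-\theta')$ and pairing with $\theta-\theta'$. Applying it with $\theta=\hat\theta$, $\theta'=\theta^*$ and using $\mathcal{F}(\theta^*)=0$ gives
\[
\mu\|\hat\theta-\theta^*\|^2\le\langle\mathcal{F}(\hat\theta),\hat\theta-\theta^*\rangle\le \varepsilon\|\hat\theta-\theta^*\|
\]
by Cauchy--Schwarz. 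Dividing by $\|\hat\theta-\theta^*\|$ (the case $\hat\theta=\theta^*$ is trivial) yields $\|\hat\theta-\theta^*\|\le\varepsilon/\mu$.

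For the function-value bound, I will use the strong convexity lower bound around $\hat\theta$: for all $\theta$,
\[
f(\theta)\ge f(\hat\theta)+\langle\mathcal{F}(\hat\theta),\theta-\hat\theta\rangle+\tfrac{\mu}{2}\|\theta-\hat\theta\|^2 .
\]
This is Taylor's theorem with integral remainder together with the Hessian lower bound. Minimizing the right-hand side over $\theta$ (the minimizer is $\theta=\hat\theta-\mathcal{F}(\hat\theta)/\mu$) gives $f(\theta)\ge f(\hat\theta)-\|\mathcal{F}(\hat\theta)\|^2/(2\mu)$ for every $\theta$. Taking $\theta=\theta^*$ yields $f(\hat\theta)-f(\theta^*)\le\varepsilon^2/(2\mu)$.

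No step is a serious obstacle; the argument is routine. The only care needed is to derive the first-order strong convexity inequalities from the Hessian condition, which is immediate from the $C^2$ assumption, and to apply the Polyak--\L{}ojasiewicz minimization at $\hat\theta$ rather than at $\theta^*$ so that the constant comes out as $1/(2\mu)$. An alternative for the value bound is to combine the distance bound with convexity, $f(\hat\theta)-f(\theta^*)\le\langle\mathcal{F}(\hat\theta),\hat\theta-\theta^*\rangle\le\varepsilon^2/\mu$, but that loses the factor $2$, so I will use the PL route.
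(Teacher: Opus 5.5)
Your proposal is correct and follows the paper's proof: strong monotonicity of $\mathcal{F}$ plus Cauchy--Schwarz for the distance bound, and the strong-convexity lower bound expanded at $\hat\theta$ for the value gap. Minimizing that quadratic lower bound over $\theta$ is the same computation as the paper's Young's-inequality step. Your derivation of the first-order inequalities from $\nabla^2 f\succeq\mu I$ fills in a detail the paper takes for granted.
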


\begin{proof}
    Strong convexity implies \emph{strong monotonicity} of the gradient:
    $\langle \mathcal{F}(\hat\theta)-\mathcal{F}(\theta^*),\,\hat\theta-\theta^* \rangle \ge \mu \|\hat\theta-\theta^*\|^2$,
    with $\mathcal{F}(\theta^*)=0$. By Cauchy–Schwarz,
    $\mu \|\hat\theta-\theta^*\|^2 \le \|\mathcal{F}(\hat\theta)\|\,\|\hat\theta-\theta^*\|$,
    giving the distance bound.
    For the function gap, strong convexity gives
    $f(\theta^*) \ge f(\hat\theta) + \langle \mathcal{F}(\hat\theta),\theta^*-\hat\theta\rangle + \tfrac{\mu}{2}\|\theta^*-\hat\theta\|^2$.
    Rearrange and apply Young's inequality
    $\langle a,b\rangle \le \tfrac{1}{2\mu}\|a\|^2 + \tfrac{\mu}{2}\|b\|^2$ with $a=\mathcal{F}(\hat\theta)$, $b=\hat\theta-\theta^*$,
    to obtain $f(\hat\theta)-f(\theta^*) \le \|\mathcal{F}(\hat\theta)\|^2/(2\mu)$.
\end{proof}

\subsection*{Linear regression with quadratic penalty (specialization)}
Let
\[
    f(\theta)=\tfrac12\|y-X\theta\|_2^2 + \tfrac12\,\theta^\top \hat\Lambda \theta,
    \quad
    \mathcal{F}(\theta)=(X^\top X+\hat\Lambda)\theta - X^\top y,
\]
with $X\in\mathbb{R}^{n\times p}$, $y\in\mathbb{R}^n$, and assume $\hat\Lambda$ is symmetric
(only its symmetric part matters in $\nabla f$).

\begin{proposition}[Uniqueness and equality of weights]
    \label{prop:lq}
    If $A:=X^\top X+\hat\Lambda \succ 0$ and $A\hat\theta=X^\top y$, then:
    (i) $f$ is strongly convex with unique minimizer $\theta^*=A^{-1}X^\top y$;
    (ii) $\theta^*=\hat\theta$ (explicit retraining exactly recovers the original weights).
\end{proposition}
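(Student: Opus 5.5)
The argument is a direct computation that reduces to Proposition~\ref{prop:global-strong}. Since $\hat\Lambda$ is symmetric, the Hessian of $f$ is constant in $\theta$:
\[
\nabla^2 f(\theta) = X^\top X + \hat\Lambda = A.
\]
The hypothesis $A \succ 0$ holds on a symmetric matrix of finite dimension, so its smallest eigenvalue $\mu := \lambda_{\min}(A)$ is strictly positive. Consequently $\nabla^2 f(\theta) \succeq \mu I$ for every $\theta \in \R^p$, which means $f$ is $\mu$-strongly convex on all of $\R^p$.

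Proposition~\ref{prop:global-strong} then gives a unique global minimizer $\theta^*$, and $\mathcal{F}(\theta)=0$ has $\theta^*$ as its only solution. Because $A$ is positive definite, it is invertible. Solving $\mathcal{F}(\theta) = A\theta - X^\top y = 0$ therefore gives $\theta^* = A^{-1}X^\top y$, which proves (i).

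For (ii), the hypothesis $A\hat\theta = X^\top y$ says exactly that $\mathcal{F}(\hat\theta)=0$. Uniqueness of the zero of $\mathcal{F}$ then forces $\hat\theta = \theta^*$. Equivalently, multiplying both sides by $A^{-1}$ gives $\hat\theta = A^{-1}X^\top y = \theta^*$.

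There is no real obstacle here. The one point needing care is the gradient formula when $\hat\Lambda$ might not be symmetric. In that case
\[
\nabla\bigl(\tfrac12\theta^\top\hat\Lambda\theta\bigr) = \tfrac12(\hat\Lambda+\hat\Lambda^\top)\theta,
\]
so the statement assumes symmetry, or replaces $\hat\Lambda$ by its symmetric part, to make $\mathcal{F}$ and $A$ as written.

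One could also avoid Proposition~\ref{prop:global-strong} entirely by completing the square. With $\theta^* = A^{-1}X^\top y$,
\[
f(\theta) = f(\theta^*) + \tfrac12(\theta-\theta^*)^\top A(\theta-\theta^*),
\]
and positive definiteness of $A$ makes $\theta^*$ the unique minimizer directly.
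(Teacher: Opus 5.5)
Your proof is correct and follows essentially the same route as the paper: the Hessian of $f$ is the constant matrix $A\succ 0$, which gives strong convexity and a unique minimizer satisfying the normal equations $A\theta^*=X^\top y$, and the hypothesis $A\hat\theta=X^\top y$ then forces $\hat\theta=\theta^*$. Your explicit choice $\mu=\lambda_{\min}(A)$ and your remark on the symmetric part of $\hat\Lambda$ (which the paper handles by assuming $\hat\Lambda$ symmetric) are useful refinements, not a different argument.
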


\begin{proof}
    $A\succ0$ implies $\nabla^2 f(\theta)=A\succ0$, hence strong convexity and uniqueness.
    The normal equations for the minimizer are $A\theta^*=X^\top y$, so $\theta^*=\hat\theta$ by the assumed identity.
\end{proof}

\paragraph{Remark (singular case).}
If \(A\succeq0\) but singular, the problem is convex but not strongly convex.
A minimizer exists iff \(X^\top y\in \mathrm{range}(A)\), in which case the set
of minimizers is
\[
    A^\dagger X^\top y+\ker(A).
\]
If \(X^\top y\notin \mathrm{range}(A)\), the objective is unbounded below.
Thus unique recovery is impossible without further structure (e.g., an extra
constraint or a selection rule such as minimum norm).

\paragraph{Summary.} Suppose the trained weights
\(\hat\theta\) were originally obtained as the solution to
\[
\hat\theta
\in
\arg\min_\theta
\left\{
\mathcal L(\theta,D)
+
\mathcal R(\theta,\Lambda^\star)
\right\}
\]
for some true, possibly implicit, regularizer \(\Lambda^\star\). Now consider an estimated regularizer \(\hat\Lambda\).
If \(\hat\Lambda\) satisfies the first-order matching condition
\[
\nabla_\theta \mathcal R(\hat\theta,\hat\Lambda)
=
-\nabla_\theta \mathcal L(\hat\theta,D),
\]
then \(\hat\theta\) is stationary for the regularized objective
\[
\mathcal L(\theta,D)
+
\mathcal R(\theta,\hat\Lambda), \qquad
\nabla_\theta
\left[
\mathcal L(\hat\theta,D)
+
\mathcal R(\hat\theta,\hat\Lambda)
\right]
=0.
\]

However, this first-order condition alone does not imply that retraining with
\(\hat\Lambda\) recovers \(\hat\theta\). One also needs additional curvature or
convexity assumptions. Under such assumptions, \(\hat\theta\) is a local or
global minimizer of
\[
\theta
\mapsto
\mathcal L(\theta,D)
+
\mathcal R(\theta,\hat\Lambda).
\]
If this minimizer is unique, then
\[
\hat\theta
=
\arg\min_\theta
\left\{
\mathcal L(\theta,D)
+
\mathcal R(\theta,\hat\Lambda)
\right\}.
\]

This establishes recovery of \(\hat\theta\) conditional on the
estimated regularizer \(\hat\Lambda\). It does not necessarily establish
recovery or identification of the original regularizer \(\Lambda^\star\).
In particular, it is possible that
$\hat\Lambda \neq \Lambda^\star$
while both regularizers induce the same recovered parameter \(\hat\theta\),
provided they generate the same regularizer gradient at \(\hat\theta\):
\[
\nabla_\theta \mathcal R(\hat\theta,\hat\Lambda)
=
\nabla_\theta \mathcal R(\hat\theta,\Lambda^\star).
\]
More generally, \(\hat\Lambda\) and \(\Lambda^\star\) need only be equivalent
in the directions relevant to the stationarity and optimality conditions at
\(\hat\theta\). Therefore, \(\hat\Lambda\) may rationalize or reproduce the
trained weights without being identifiable as the true regularizer.

\section{PAC Guarantees}
\label{app:pac}


\subsection{Linear parameterized case}

\paragraph{Setup.} We address the candidate regularizer case within Section \ref{sec:identifiability}, where we simultaneously fit a number of candidate regularizers with scale coefficients  -- essentially turning the problem into linear regression over a set of regularizer gradients. 

Say we have a regularizer $\gR(\theta, D, \Lambda)$ that is linear in $r$ unknown coefficients:
$$\gR(\theta, D, \Lambda) := \sum_{k=1}^r \lambda_k \gR_k(\theta, D), \qquad \Lambda = (\lambda_1,...,\lambda_r)^\top.$$

Let $b := -\nabla_\theta \gL(\hat\theta, D) \in \R^p$. We define the endpoint regularizer-gradient design matrix 
$$\Phi \in \R^{p \times r}, \qquad \Phi_{i,j} := \partial_{\theta_i} R_j(\hat\theta, D), \qquad \text{so } \nabla_\theta \gR(\theta, D,\Lambda) = \Phi \Lambda.$$
As this is a linear regression problem, the endpoint estimator is
$$\Lambda \in \argmin_{\Lambda \in \R^r} \|\Phi \Lambda - b\|_2^2.$$
The following set of assumptions suffice for a pseudo-parametric rate from using the least-squares estimator in the case where the model parameter errors are correlated.

\begin{aspt}[Endpoint well-specification]
There exists $\Lambda^\star \in \R^r$ so that $b = \Phi \Lambda^\star + \xi + a$, where $a \in \R^p$ is a deterministic approximation bias term.
    \begin{itemize}
        \item (Sub-Gaussian correlated errors) $\E[\xi | \Phi] = 0$, and there exists $\sigma \geq 0$ so that $\E[\exp(tv^\top \xi) | \Phi] \leq \exp(\sigma^2t^2 v^\top \Sigma v/2)$ for all $v \in \R^p$.
        \item (Realizability) $\alpha = \frac{1}{\sqrt{p}} \| (\Phi^\top\Phi)^{-1/2} \Phi^\top a\| < \infty$.
        \item (Identifiability and conditioning) There exist $\kappa, \rho$ so that $\lambda_{\min}(p^{-1}\Phi^\top\Phi) \geq \kappa > 0$, and $\rho = \lambda_{\max}((\Phi^\top\Phi)^{-1/2} \Phi^\top \Sigma \Phi (\Phi^\top\Phi)^{-1/2}) < \infty$.
    \end{itemize} 
    \label{aspt:parametric}
\end{aspt}
$\kappa$ is a minimum eigenvalue assumption, and $\rho$ describes an estimation error inflation factor induced by correlation across endpoint parameter errors. This is the largest eigenvalue of $\Sigma$ when restricted to the column space of the regularizer gradient design matrix. Observe that this is 1 when $\Sigma = I_p$.  

More explicitly, the error covariance matrix $\Sigma$ is the covariance of the endpoint gradient error $\xi$. The final inflation factor describes how the endpoint gradient errors are correlated after projection onto the span of candidate regularizer gradients. For example, if $\Sigma$ has large eigenvalues along directions that lie in the column space of $\Phi$, then the noise is aligned with the same directions that the candidate regularizers are trying to explain, and learning then becomes difficult.

On the other hand, if two regularizer gradients are highly collinear, then $\Phi^\top \Phi$ is ill-conditioned. This is encapsulated in the minimum eigenvalue condition. 

\begin{thm}[PAC Bound for the Linear Parameterized Case]
    Conditional on $\Phi$, it holds with probability at least $1-\delta$ that 
    $$\|\Lambda - \Lambda^\star\|_2 \leq \frac{2\sigma\sqrt{2\rho}}{\kappa}\sqrt{\frac{r\log 5 + \log(2/\delta)}{p}} + \frac{\alpha}{\sqrt{\kappa}}.$$
\end{thm}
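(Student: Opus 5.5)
The plan is to use the closed form of the least-squares estimator and split the error into a noise term and a bias term. Since $\lambda_{\min}(p^{-1}\Phi^\top\Phi)\ge\kappa>0$, $\Phi^\top\Phi$ is invertible and $\Lambda = (\Phi^\top\Phi)^{-1}\Phi^\top b$. Substituting $b=\Phi\Lambda^\star+\xi+a$ from Assumption~\ref{aspt:parametric} gives
\[
\Lambda-\Lambda^\star = (\Phi^\top\Phi)^{-1}\Phi^\top\xi + (\Phi^\top\Phi)^{-1}\Phi^\top a .
\]
Writing $(\Phi^\top\Phi)^{-1} = (\Phi^\top\Phi)^{-1/2}(\Phi^\top\Phi)^{-1/2}$ and using $\|(\Phi^\top\Phi)^{-1/2}\|_{\op}\le (p\kappa)^{-1/2}$ bounds the bias term by $(p\kappa)^{-1/2}\cdot\sqrt{p}\,\alpha = \alpha/\sqrt{\kappa}$, which is exactly the second term of the theorem.

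For the noise term, define the whitened vector $Z := (\Phi^\top\Phi)^{-1/2}\Phi^\top\xi\in\R^r$. Then $\|(\Phi^\top\Phi)^{-1}\Phi^\top\xi\|\le (p\kappa)^{-1/2}\|Z\|$. Conditional on $\Phi$, $Z$ is sub-Gaussian. For any unit $u\in\R^r$, set $v=\Phi(\Phi^\top\Phi)^{-1/2}u$; the assumed moment generating function bound gives $\E[\exp(t u^\top Z)\mid\Phi]\le \exp(\sigma^2 t^2 u^\top M u/2)$ with $M=(\Phi^\top\Phi)^{-1/2}\Phi^\top\Sigma\Phi(\Phi^\top\Phi)^{-1/2}$. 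Hence $u^\top Z$ is $\sigma^2\rho$-sub-Gaussian uniformly over the sphere.

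Next I bound $\|Z\|$ with a $1/2$-net $\mathcal N$ of $S^{r-1}$ of size at most $5^r$. The standard argument gives $\|Z\|\le 2\max_{u\in\mathcal N}u^\top Z$. Apply the two-sided sub-Gaussian tail $\prob(|u^\top Z|>s)\le 2\exp(-s^2/(2\sigma^2\rho))$ and take a union bound over $\mathcal N$. Setting the total failure probability to $\delta$ gives, with probability at least $1-\delta$,
\[
\|Z\|\le 2\sigma\sqrt{2\rho\,(r\log 5+\log(2/\delta))}.
\]
This combines to give the noise term $\frac{2\sigma\sqrt{2\rho}}{\sqrt{\kappa}}\sqrt{\frac{r\log5+\log(2/\delta)}{p}}$.

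The main obstacle is matching the constant exactly. The theorem has $\kappa$ rather than $\sqrt\kappa$ in the denominator of the noise term. If $\kappa\le 1$, my bound is stronger and implies the stated one. In general, I would instead bound the noise term through $(\Phi^\top\Phi)^{-1}$ directly, applying sub-Gaussianity of $u^\top(p^{-1}\Phi^\top\Phi)^{-1}p^{-1}\Phi^\top\xi$ with variance proxy $\sigma^2 u^\top(\Phi^\top\Phi)^{-1}M(\Phi^\top\Phi)^{-1}u\cdot p \le \sigma^2\rho/(p\kappa)$. I would then check which normalization of $\kappa$ reproduces the stated form, adding the mild condition $\kappa\le1$ if it is needed. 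This is only bookkeeping, and the net and union-bound structure stays the same.
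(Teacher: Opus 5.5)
Your argument follows the paper's proof step for step. It uses the same normal-equation split into a noise term and a bias term, the same whitened vector $Z=(\Phi^\top\Phi)^{-1/2}\Phi^\top\xi$ with variance proxy at most $\sigma^2\rho$, the same $1/2$-net of size $5^r$ with a union bound, and the same $\alpha/\sqrt{\kappa}$ bias bound. Your noise bound, $\frac{2\sigma\sqrt{2\rho}}{\sqrt{\kappa}}\sqrt{(r\log 5+\log(2/\delta))/p}$, is exactly what the paper's own inequalities give, since the paper multiplies the net bound by $1/\sqrt{p\kappa}$. The $\kappa$ in the paper's final display is a slip, not a constant that bookkeeping will recover.

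\textbf{The fallback cannot reproduce the stated form.}
\begin{itemize}
\item On the direct route, the correct variance proxy is $\sigma^2u^\top(\Phi^\top\Phi)^{-1}\Phi^\top\Sigma\Phi(\Phi^\top\Phi)^{-1}u=\sigma^2u^\top(\Phi^\top\Phi)^{-1/2}M(\Phi^\top\Phi)^{-1/2}u\le\sigma^2\rho/(p\kappa)$. This gives $\kappa^{-1/2}$ again.
\item The expression you wrote, $\sigma^2 p\,u^\top(\Phi^\top\Phi)^{-1}M(\Phi^\top\Phi)^{-1}u$, is not the variance proxy of that linear form. Bounding it would spuriously give $\sigma^2\rho/(p\kappa^2)$, and hence exactly the $\kappa^{-1}$ version.
\end{itemize}

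\textbf{The $\kappa^{-1}$ version is false in general.} Take $r=1$, $a=0$, $\Phi=\sqrt{\kappa}\,\mathbf{1}_p$ and $\xi\sim\mathcal{N}(0,\sigma^2I_p)$. Then all assumptions hold with $\Sigma=I_p$, $\rho=1$, $\alpha=0$, and $\Lambda-\Lambda^\star\sim\mathcal{N}(0,\sigma^2/(p\kappa))$. The claimed event becomes $|Z|\le 2\sqrt{2\log(10/\delta)}/\sqrt{\kappa}$ for a standard normal $Z$. Its probability tends to $0$ as $\kappa\to\infty$; for example, it is about $0.46$ at $\delta=0.1$, $\kappa=100$.

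A scaling argument shows the same thing. The substitution $\Phi\mapsto c\Phi$, $\Lambda^\star\mapsto\Lambda^\star/c$ leaves $b,\sigma,\Sigma,\rho,\alpha$ unchanged. It multiplies the admissible $\kappa$ by $c^2$ and the estimation error by $1/c$. So any valid noise term must scale like $\kappa^{-1/2}$.

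\textbf{Correct conclusion.} Your proof establishes the bound with $\sqrt{\kappa}$, and this implies the displayed statement exactly when $\kappa\le 1$. Since $\kappa$ is only assumed to lower-bound $\lambda_{\min}(p^{-1}\Phi^\top\Phi)$, you can always obtain the displayed form by applying it with $\min(\kappa,1)$ in place of $\kappa$.
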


 Defining $p_{\operatorname{eff}} = {p}/{\rho}$ for the effective parameter count, this shows that the error scales in $O(\sqrt{r/p_{\operatorname{eff}}})$. This result illuminates a phenomenon we've seen within the experiments -- that the error increases in the number of regularizer parameters we estimate, while decreasing drastically when multiple independent endpoints are used as doubling the number of independent endpoints doubles the effective parameter count. 

\begin{proof}
    The normal equations state that $\Phi^\top (\Phi \Lambda - b) = 0$. Substituting and rearranging, we obtain $\Phi^\top (\Phi \Lambda - \Phi \Lambda^\star - \xi - a) = 0$. The conditioning assumption allows us to invert the covariance matrix, and doing so yields
    $$\Lambda - \Lambda^\star = (\Phi^\top \Phi)^{-1} \Phi^\top \xi + (\Phi^\top \Phi)^{-1} \Phi^\top a.$$

    Now, because $\lambda_{\min}(p^{-1}\Phi^\top\Phi) \geq \kappa > 0$, 
    \begin{align*}
        (\Phi^\top \Phi)^{-1} \Phi^\top \xi
        &= (\Phi^\top \Phi)^{-1/2} (\Phi^\top \Phi)^{-1/2}\Phi^\top \xi
        \leq \frac{1}{\sqrt{p\kappa}} (\Phi^\top \Phi)^{-1/2}\Phi^\top \xi.
    \end{align*}
    Take any vector $u \in \R^r$ where $\|u\|=1$. $u^\top (\Phi^\top \Phi)^{-1/2}\Phi^\top \xi$ is now a mean-zero sub-Gaussian random variable with variance proxy $\sigma^2 u^\top (\Phi^\top\Phi)^{-1/2} \Phi^\top \Sigma \Phi (\Phi^\top\Phi)^{-1/2} u \leq \sigma^2 \rho$. By the standard sub-Gaussian tail bound as well as a union bound over a $1/2$-net of the unit ball in $r$ dimensions (which has cardinality $(1+2/(1/2))^r = 5^r$), we can conclude that it holds w.p. at least $1-\delta$ that
    $$\| (\Phi^\top \Phi)^{-1/2}\Phi^\top \xi\|_2 \leq 2\sqrt{2\sigma^2 \rho(r \log 5 + \log(2/\delta))}.$$
    This implies a bound on the error term
    $$\|(\Phi^\top \Phi)^{-1} \Phi^\top \xi \|_2 \leq \frac{2\sigma\sqrt{2\rho}}{\kappa}\sqrt{\frac{r\log 5 + \log(2/\delta)}{p}}.$$
    
    It remains to handle the deterministic bias term.  We assumed that $\alpha = \frac{1}{\sqrt{p}} \| (\Phi^\top\Phi)^{-1/2} \Phi^\top a\| < \infty$, so
    $$\|(\Phi^\top \Phi)^{-1} \Phi^\top a\|_2 \leq \|(\Phi^\top \Phi)^{-1/2} \|_{\text{op}} \|(\Phi^\top \Phi)^{-1/2} \Phi^\top a \|_2 \leq \frac{1}{\sqrt{p\kappa}} \cdot \alpha \sqrt{p},$$
    yielding the result.
\end{proof}

\subsection{Nonparametric case.}

\paragraph{Setup.} Let $b = -\nabla_\theta \gL(\hat\theta, D) \in \R^p$. Consider a regularizer class $\gR$. Defining $g_R = \nabla_\theta R(\hat\theta, D)$, this induces a gradient class $\gG = \{g_R : R \in \gR\} \subseteq \R^p$. Here, we minimize the squared error and choose $g \in \argmin_{g \in \gG} \| g - b\|_2^2$. To obtain a bound on the error, we make the following assumptions:
\begin{aspt}[Nonparametric endpoint well-specification]
    $b = b^\star + \xi$, where $b^\star$ is the predictable endpoint bias and $\xi$ is mean-zero correlated error. Additionally,
    \begin{itemize}
        \item (Sub-Gaussian correlated errors)  $\E[\xi | \Phi] = 0$, and there exists $\sigma \geq 0$ so that $\E[\exp(tv^\top \xi) | \Phi] \leq \exp(\sigma^2t^2 v^\top \Sigma v/2)$ for all $v \in \R^p$.
        \item (Bounded log-covering number) $\log \gN_{\gG}(\epsilon)$ is finite for some $\epsilon > 0$.
        \item (Correlation inflation factor) The restricted correlation inflation factor
        $$\rho(\gG) = \sup_{g \neq g' \in \gG} \frac{(g - g')^\top \Sigma (g - g')}{(g - g')^\top(g - g')} < \infty.$$
    \end{itemize}
\end{aspt}

\begin{thm}[PAC Bound for the Nonparametric Case]
    Let $\hat R$ be the empirical risk minimizer, $\gG$ the induced regularizer gradient class by the regularizer class $\gR$, and $\gG_\epsilon$ an $\epsilon$-net of $\gG$. 
    $$\frac{1}{p}\|\nabla_\theta \hat R(\hat\theta, D) + \E\nabla_\theta \gL(\hat\theta, D)\| - \inf_{R \in \gR} \frac{1}{p}\|\nabla_\theta  R(\hat\theta, D) + \E\nabla_\theta \gL(\hat\theta, D)\|$$
    is bounded w.p. at least $1-\delta$ by
    $$2 \epsilon \inf_{R \in \gR} \|\nabla_\theta  R(\hat\theta, D) + \E\nabla_\theta \gL(\hat\theta, D)\| + \epsilon^2 + 2\sigma  \sup_{g,g' \in \gG} \frac{1}{\sqrt{p}} \|g - g'\|_2 \sqrt{\frac{2\rho(\gG)(\log \gN_{\gG}(\epsilon) + \log(2/\delta))}{p}}.$$
\end{thm}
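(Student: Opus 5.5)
We treat this as least-squares empirical risk minimization over the finite-dimensional class $\gG \subseteq \R^p$ with correlated sub-Gaussian noise. The argument is the usual "basic inequality plus a union bound over a net", in the same style as the linear parameterized theorem above.

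Write $b = -\nabla_\theta \gL(\hat\theta,D) = b^\star + \xi$ and read $\E\nabla_\theta\gL(\hat\theta,D)$ as $-b^\star$. Then for any $g\in\gG$ we have $\nabla_\theta R + \E\nabla_\theta\gL = g - b^\star$. Let $\hat g = g_{\hat R}$ be the empirical minimizer and $g^\circ$ a (near-)minimizer of $\|g-b^\star\|$ over $\gG$.

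\textbf{Step 1 (basic inequality).} Optimality gives $\|\hat g - b\|^2 \le \|g^\circ - b\|^2$. Expanding $b = b^\star+\xi$ yields
\[
\|\hat g - b^\star\|^2 - \|g^\circ - b^\star\|^2 \le 2\langle \xi, \hat g - g^\circ\rangle .
\]
So the excess risk in squared form is controlled by a noise cross term indexed by the random $\hat g$. The stated quantity, with $1/p$ normalizations, follows from this after dividing by $p$. Where the statement uses unsquared norms, we either interpret it loosely as squared or use $\|u\|-\|v\|\le(\|u\|^2-\|v\|^2)/\|v\|$. We will commit to the squared reading and track the normalization at the end.

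\textbf{Step 2 (discretize).} Let $\gG_\epsilon$ be an $\epsilon$-net and $\tilde g\in\gG_\epsilon$ the point with $\|\hat g-\tilde g\|\le\epsilon$, and approximate $g^\circ$ by a net point as well. Replacing $\hat g$ by $\tilde g$ on the left changes $\|\hat g - b^\star\|^2$ by at most
\[
2\epsilon\|g^\circ-b^\star\|+\epsilon^2 \quad (\text{up to triangle-inequality slack}),
\]
which is exactly the source of the terms $2\epsilon\inf_R\|\cdot\|+\epsilon^2$ in the bound. The cross term then becomes $\langle\xi,\tilde g - \tilde g^\circ\rangle$ over finitely many pairs in $\gG_\epsilon$.

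\textbf{Step 3 (sub-Gaussian union bound).} For fixed $g,g'$, the variable $\langle \xi, g-g'\rangle$ is mean zero and sub-Gaussian with variance proxy
\[
\sigma^2 (g-g')^\top\Sigma(g-g') \le \sigma^2\rho(\gG)\|g-g'\|^2 \le \sigma^2\rho(\gG)\sup_{g,g'}\|g-g'\|^2 .
\]
A two-sided tail plus a union bound over $\gN_\gG(\epsilon)$ points gives, with probability at least $1-\delta$,
\[
|\langle\xi,g-g'\rangle| \le \sigma\sup\|g-g'\|\sqrt{2\rho(\gG)(\log\gN_\gG(\epsilon)+\log(2/\delta))}.
\]
Here we union over $\tilde g$ only, with $g^\circ$ fixed, so the count is $\gN_\gG(\epsilon)$. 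Multiplying by $2$ and dividing by $p$, written as $\tfrac{1}{\sqrt p}\sup\|g-g'\|\cdot\sqrt{\cdot/p}$, matches the last term.

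\textbf{Main obstacle.} The delicate step is the bookkeeping in Step 2. The net residual $\langle \xi, \hat g-\tilde g\rangle$ is not directly covered by the union bound. We would absorb it by noting it is bounded by $\epsilon\|\xi\|$, or, more cleanly, by choosing $\tilde g$ and $\hat g$ so that only the deterministic $\epsilon$ terms appear, by applying the basic inequality to the projection onto the net. We would also need to reconcile the $1/p$ normalizations with the unsquared norms in the statement. If this does not close as stated, we would accept constant-factor slack on the $\epsilon$ terms.
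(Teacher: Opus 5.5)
\textbf{There is a genuine gap, in your Step 2.} Your skeleton matches the paper's: the basic inequality, the variance proxy $\sigma^2\rho(\gG)\|g-g'\|^2$ for $\langle\xi,g-g'\rangle$, and a union bound over $\gN_\gG(\epsilon)$ net points with the comparator held fixed. Your squared reading of the norms is also the one the paper's proof uses. Step 2, however, is not a bookkeeping issue.

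First, the $\epsilon$-terms are misattributed. You never need to change the left side of $\|\hat g-b^\star\|^2-\|g^\circ-b^\star\|^2\le 2\langle\xi,\hat g-g^\circ\rangle$. If you did, swapping $\hat g$ for $\tilde g$ would cost $2\epsilon\|\hat g-b^\star\|+\epsilon^2$. That is a random quantity, and it is no smaller than $2\epsilon\|g^\circ-b^\star\|$.

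Second, and more seriously, the residual $2\langle\xi,\hat g-\tilde g\rangle$ can only be bounded by $2\epsilon\|\xi\|$. Here $\|\xi\|$ is of order $\sigma\sqrt{\operatorname{tr}\Sigma}$, typically $\sigma\sqrt p$, and nothing in the stated bound absorbs it. Your fallback of applying the basic inequality to the nearest net point fails for the same reason. That point $\tilde g$ only satisfies $\|\tilde g-b\|\le\|g^\circ-b\|+\epsilon$. Squaring leaves a term $2\epsilon\|g^\circ-b\|$, which again contains $\|\xi\|$.

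\textbf{The missing idea, which is what the paper does, is to discretize the estimator itself rather than the error.} Define $g_\epsilon$ as a minimizer of $\|g-b\|^2$ over the net $\gG_\epsilon$. Compare it to $g_\epsilon^\star$, a minimizer of $\|g-b^\star\|^2$ over $\gG_\epsilon$. Then:
\begin{itemize}
\item The basic inequality $\|g_\epsilon-b^\star\|^2-\|g_\epsilon^\star-b^\star\|^2\le 2\langle\xi,g_\epsilon-g_\epsilon^\star\rangle$ holds exactly.
\item The random $g_\epsilon$ ranges over only $\gN_\gG(\epsilon)$ candidates and $g_\epsilon^\star$ is deterministic, so your Step 3 union bound covers the cross term with no residual.
\item The $\epsilon$-terms enter only on the deterministic comparator side, through $\|g_\epsilon^\star-b^\star\|\le\inf_{g\in\gG}\|g-b^\star\|+\epsilon$. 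Squaring this produces exactly $2\epsilon\inf_{g\in\gG}\|g-b^\star\|+\epsilon^2$.
\end{itemize}

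Be aware that this makes the theorem a statement about the net-restricted ERM, not the full-class $\hat R$. For the full-class ERM, a single-scale net argument really does produce your extra $2\epsilon\|\xi\|$ term, which would need a separate high-probability bound on $\|\xi\|$, or else a chaining argument. So the obstacle you flagged is real, and the statement glosses over it.
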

As before, the effective parameter count is $p / \rho(\gG)$. This blows up if the error covariance matrix $\Sigma$ has large eigenvalues in directions representable by differences in the gradient class. Note that any blowup is not caused by the complexity of $\gG$ alone, but is instead caused by the interaction between differences in $\gG$ and the error covariance matrix.  
The error covariance matrix describes how noise in the loss gradient target is correlated across model parameters.

\begin{proof}
    Consider an $\epsilon$-net of $\gG$ that we will call $\gG_\epsilon$. Let $$g_\epsilon \in \argmin_{g \in \gG_\epsilon} \|g - b\|_2^2, \qquad g_\epsilon^\star \in \argmin_{g \in \gG_\epsilon} \|g - b^\star\|_2^2.$$
    Now because $\|g_\epsilon - b\|_2^2 \leq \|g_\epsilon^\star - b\|_2^2$ and $b = b^\star + \xi$, 
    $$\frac{1}{p}\|g_\epsilon - b^\star\|_2^2 - \frac{1}{p}\|g_\epsilon^\star - b\|_2^2 \leq \frac{2}{p}\langle g_\epsilon - g_\epsilon^\star, \xi \rangle \leq 2 \sup_{g \in \gG_\epsilon} \left|  \frac{1}{p}\langle g_\epsilon - g_\epsilon^\star, \xi \rangle \right|.$$
    Now observe that $\frac{1}{p}\langle g_\epsilon - g_\epsilon^\star, \xi \rangle$ is sub-Gaussian with variance proxy bounded by
    $$\frac{\sigma^2}{p^2}( g_\epsilon - g_\epsilon^\star )^\top \Sigma ( g_\epsilon - g_\epsilon^\star ) \leq \frac{\sigma^2 \rho(\gG_\epsilon)}{p}\sup_{g,g' \in \gG_\epsilon} \frac{1}{\sqrt{p}} \|g - g'\|_2.$$
    It then remains to employ the sub-Gaussian tail bound and a union bound over $\gG_\epsilon$ to find that it holds w.p. at least $1-\delta$ that 
    $$\sup_{g \in \gG_\epsilon} \left|  \frac{1}{p}\langle g_\epsilon - g_\epsilon^\star, \xi \rangle \right| \leq \sigma \sup_{g,g' \in \gG_\epsilon} \frac{1}{\sqrt{p}} \|g - g'\|_2 \sqrt{\frac{2\rho(\gG_\epsilon)(\log \gN_{\gG}(\epsilon) + \log(2/\delta))}{p}}.$$
    It then follows that $\frac{1}{p}\|g_\epsilon - b^\star\|_2^2 - \frac{1}{p}\|g_\epsilon^\star - b\|_2^2$ is bounded w.p. at least $1-\delta$ by
    $$2 \epsilon \inf_{g \in \gG}\|g - b^\star\| + \epsilon^2 + 2\sigma  \sup_{g,g' \in \gG_\epsilon} \frac{1}{\sqrt{p}} \|g - g'\|_2 \sqrt{\frac{2\rho(\gG_\epsilon)(\log \gN_{\gG}(\epsilon) + \log(2/\delta))}{p}}.$$
\end{proof}

\section{Learning a regularizer on a single trajectory}
\label{app:single-trajectory}
This generalizes the setup within the manuscript to multiple points in a single trajectory. It turns out that you can get concentration via a martingale bound, but to an object that amounts to the regularizer that best approximates the predictable bias in $T$ iterations. 

\paragraph{Setup.} Let $\hat\theta^{(1)},...\hat\theta^{(T)}$ be a single trajectory produced by the implicitly biased learning algorithm, and $\theta^{(t)},...,\theta^{(T)}$ be a coupled trajectory produced by a learning algorithm without said bias. Write $D_t \subseteq D$ for the minibatch at time $t$. Define targets
$$b_t := \frac{\hat\theta^{(t)} - \hat\theta^{(t+1)}}{\eta_t} - \frac{\theta^{(t)} - \theta^{(t+1)}}{\eta_t},$$
which we obtain by rearranging for example $\theta^{(t+1)} = \theta^{(t)} - \eta_t g_t$, where $g_t$ is the update direction.

\paragraph{Examples.} This encompasses a host of setups. For example, if one is comparing SGD to batch GD, we obtain
$$b_t = \nabla_\theta \mathcal{L}(\hat\theta^{(t)}, D_t) - \nabla_\theta \mathcal{L}(\theta^{(t)}, D).$$
If one is comparing SGD with dropout to plain SGD, we obtain
$$b_t = \frac{\hat\theta^{(t)} - \hat\theta^{(t+1)}}{\eta_t} - \nabla_\theta \mathcal{L}(\hat\theta^{(t)}, D_t).$$
And finally, if we only consider the endpoint $t=T$, we recover the formulation from earlier in the paper:
$$b_T = - \nabla_\theta \mathcal{L}(\theta^{(t)}, D).$$

\paragraph{Regression.} As before, we regress $b_1,...,b_T$ on $\nabla_\theta R(\hat\theta^{(t)}, D, \Lambda)$, optimizing over either $R \in \mathcal{R}$ or $\Lambda$ depending on whether one deals with a nonparametric regularizer class or a parametric regularizer class.

Consider the filtration $\gF_t$. Write $$b_t^\star := \E[b_t \mid \gF_t], \text{ and } M_{t+1} := b_t - b_t^\star. \text{ Then it follows that } b_t = b_t^\star + M_{t+1}, \text{ and } \E[M_{t+1} \mid \gF_t] = 0.$$

\paragraph{Nonparametric case.} We consider the nonparametric case. Define a regularizer class $\gR$, with $\rho$-log-covering number $\log \gN_{\gR}(\rho)$. To combat heteroskedasticity, we accommodate p.s.d. weight matrices $W_t$ that are $\gF_t$-measurable. Ideally we have $W_t \approx \E[M_{t+1}M_{t+1}^\top \mid \gF_t]^{-1} =: \Sigma_t^{-1}$. This induces the loss
$$\hat\gE_T(R) := \sum_{t=0}^{T-1} \|W_t^{1/2}(b_t - \nabla_\theta R(\hat\theta^{(t)}, D))\|_2^2, \text{ and we naturally select } \hat R_T \in \argmin_{R \in \gR} \hat\gE_T(R).$$
Their population equivalents are
$$\gE_T(R) := \sum_{t=0}^{T-1} \|W_t^{1/2}(b_t^\star - \nabla_\theta R(\hat\theta^{(t)}, D))\|_2^2, \text{ and } R_T^\star \in \argmin_{R \in \gR} \gE_T(R),$$
yielding the best approximation within $\gR$ to the predictable bias $b_t^\star := \E[b_t \mid \gF_t]$.

We also require
\begin{enumerate}
    \item \textbf{Identifiability.} $\forall R, R' \in \gR \;\; \exists t \in [T] \text{ s.t. } \nabla_\theta R(\hat\theta^{(t)}, D) \neq \nabla_\theta R'(\hat\theta^{(t)}, D).$
    \item \textbf{Measurability.} The maps $R \mapsto \nabla_\theta R(\hat\theta^{(t)}, D) $ and weight matrices $W_t$ are measurable for each $t$, and $b_t \in L^2$.
    \item \textbf{Bounded increments.} Define $\Delta_t(R, R') := W_t^{1/2}(\nabla_\theta R(\hat\theta^{(t)}, D) - \nabla_\theta R'(\hat\theta^{(t)}, D))$, and the martingale increments $X_{t+1}(R, R') = \langle \Delta_t(R, R'), W_t^{1/2}M_{t+1}\rangle$. Assume these increments are almost surely bounded: $|X_{t+1}(R, R')|\leq c(R, R')$ for all $t=1,...,T$.
\end{enumerate}
It is also helpful to define the quadratic variation $V_T(R, R') := \sum_{t=0}^{T-1} \E[X_{t+1}(R,R')^2 \mid \gF_t]$. Then, we can obtain the following generalization bound:

\begin{thm}[Within-trajectory generalization bound] With probability at least $1-\delta$,
    $$\hat\gE_T(\hat R_T) - \inf_{R \in \gR} \gE_T(R) \leq \inf_{\rho > 0} \sup_{R,R' \in \gR} O\left(\sqrt{V_T(R,R') \log \left(\frac{\gN_{\gR}(\rho)}{\delta}\right)} + c(R,R') \log \left(\frac{\gN_{\gR}(\rho)}{\delta}\right)\right).$$
\end{thm}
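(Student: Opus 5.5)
The argument follows the standard empirical-process template for least squares, with the i.i.d. Hoeffding/Bernstein step replaced by Freedman's martingale inequality. The first step is an algebraic decomposition of the empirical loss. For any fixed $R\in\gR$, write $g_t(R):=\nabla_\theta R(\hat\theta^{(t)},D)$ and substitute $b_t=b_t^\star+M_{t+1}$ into $\hat\gE_T(R)$:
\[
\hat\gE_T(R)=\gE_T(R)+2\sum_{t=0}^{T-1}\langle W_t^{1/2}(b_t^\star-g_t(R)),W_t^{1/2}M_{t+1}\rangle+\sum_{t=0}^{T-1}\|W_t^{1/2}M_{t+1}\|_2^2 .
\]
The last sum does not depend on $R$, so it cancels in any difference $\hat\gE_T(R)-\hat\gE_T(R')$. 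The cross terms combine into $2\sum_t X_{t+1}(R',R)$. Hence, for every pair,
\[
\hat\gE_T(R)-\hat\gE_T(R')=\gE_T(R)-\gE_T(R')+2S_T(R',R),\qquad S_T(R,R'):=\sum_{t=0}^{T-1}X_{t+1}(R,R').
\]
Each $S_T(R,R')$ is a martingale with respect to $\gF_t$. This holds because $\Delta_t$ and $W_t$ are $\gF_t$-measurable (measurability assumption) and $\E[M_{t+1}\mid\gF_t]=0$.

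The second step applies the decomposition with $R=\hat R_T$ and $R'=R_T^\star$. Since $\hat R_T$ minimizes $\hat\gE_T$, we get $\hat\gE_T(\hat R_T)\le\hat\gE_T(R_T^\star)$. The quantity of interest, $\hat\gE_T(\hat R_T)-\gE_T(R_T^\star)$, equals $\hat\gE_T(R_T^\star)-\gE_T(R_T^\star)$ plus a nonpositive term. By the display with $R=R_T^\star$, the remaining piece is a single martingale term plus the $R$-independent noise energy $\sum_t\|W_t^{1/2}M_{t+1}\|^2$. I would read the statement as comparing $\hat\gE_T$ and $\gE_T$ after centering by this common constant; equivalently, $\hat\gE_T$ is redefined with that term subtracted, which does not change $\hat R_T$. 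This reading is needed for the bound to be nontrivial, and I would state it explicitly. With it, the excess is bounded by $2\sup_{R,R'}|S_T(R,R')|$. This supremum is also exactly what is needed to transfer the bound to the excess population risk $\gE_T(\hat R_T)-\gE_T(R_T^\star)$.

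The third step bounds the supremum by discretization plus a union bound. Take a $\rho$-net $\gR_\rho$ of $\gR$ of size $\gN_{\gR}(\rho)$. For a fixed pair in the net, Freedman's inequality for increments bounded by $c(R,R')$ with predictable quadratic variation $V_T(R,R')$ gives, with probability $1-\delta'$,
\[
|S_T|\le\sqrt{2V_T\log(2/\delta')}+\tfrac{2}{3}c\log(2/\delta').
\]
One caveat: $V_T$ is random. Freedman in its clean form requires a deterministic bound on $V_T$, or else a peeling argument over dyadic levels of $V_T$, which costs only an extra log factor that the $O(\cdot)$ absorbs. A union bound over the at most $\gN_{\gR}(\rho)^2$ pairs, with $\delta'=\delta/\gN_{\gR}(\rho)^2$, yields the $\log(\gN_{\gR}(\rho)/\delta)$ factor. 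Moving from the net to all of $\gR$ introduces a discretization error of order $\rho$ times the gradient and noise magnitudes. The identifiability assumption ensures that the net is over genuinely distinct gradient sequences, so the covering is meaningful. Finally, I would take the infimum over $\rho$.

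The main obstacle is this discretization step. The statement carries no explicit $\rho$-dependent approximation term, so either the covering must be taken in the metric induced by $\Delta_t$ with the discretization error absorbed into the $O(\cdot)$ (using bounded increments to control $|X_{t+1}|$ differences by $c$), or the result must be read as holding up to that term. I would first try the metric $d(R,R')=\max_t\|\Delta_t(R,R')\|$, under which the discretization error is at most $2\rho\sum_t\|W_t^{1/2}M_{t+1}\|$, and note that this term is absorbed in the $O(\cdot)$ once $\rho$ is chosen small.
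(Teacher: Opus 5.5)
Your decomposition, your pairwise identity $\hat\gE_T(R)-\hat\gE_T(R')=\gE_T(R)-\gE_T(R')+2S_T(R',R)$, and the Freedman-plus-union-bound step are the paper's argument. You are also right that the literal left-hand side $\hat\gE_T(\hat R_T)-\inf_R\gE_T(R)$ cannot be bounded this way, because it carries the noise energy. The paper's proof does not bound that quantity either. Starting from $\hat\gE_T(\hat R_T)\le\hat\gE_T(R)$, it cancels both the noise energy \emph{and} the $b_t^\star$ cross terms, and concludes $\gE_T(\hat R_T)-\gE_T(R)\le 2\sup_{R,R'}|\sum_t X_{t+1}(R,R')|$. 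That is an excess \emph{population} risk bound; the hat on the left of the statement is inconsistent with what is proved.

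The gap is your centering repair. After subtracting $\sum_t\|W_t^{1/2}M_{t+1}\|_2^2$, what remains is $2\sum_t\langle W_t^{1/2}(b_t^\star-g_t(R_T^\star)),W_t^{1/2}M_{t+1}\rangle$. This term has the following problems.
\begin{itemize}
\item It involves $b_t^\star$ rather than a difference of two class members, so it is not an $S_T(R,R')$ with $R,R'\in\gR$.
\item For the same reason it is not controlled by $V_T(R,R')$ or $c(R,R')$; its size scales with the misspecification $b_t^\star-g_t(R_T^\star)$.
\item It is not ``a single martingale term'' either, since $R_T^\star$ depends on the whole trajectory, so its coefficients are not predictable.
\end{itemize}
A concrete failure: take $\gR=\{R_0\}$. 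The right-hand side is then $0$. The centered left-hand side is a mean-zero variable which, for symmetric noise and $b^\star\neq g(R_0)$, is positive with probability $1/2$. The fix is already in your first step. Apply your pairwise identity to $(\hat R_T,R)$ to get $\gE_T(\hat R_T)-\gE_T(R)\le 2S_T(\hat R_T,R)\le 2\sup_{R,R'}|S_T(R,R')|$ for every $R\in\gR$, and state the theorem with $\gE_T(\hat R_T)$ on the left. This is exactly the paper's route.

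Your caveats about the random $V_T$ (peeling) and about discretization are genuine refinements the paper omits. The paper applies Freedman's inequality as if $V_T$ were fixed and union-bounds over the net with no approximation term. Two adjustments are needed, though.
\begin{itemize}
\item The discretization error $2\rho\sum_t\|W_t^{1/2}M_{t+1}\|$ grows linearly in $T$ and trades off against $\log\gN_{\gR}(\rho)$ inside the infimum. It must therefore appear as an explicit additional term, not be absorbed into the $O(\cdot)$.
\item Your metric $\max_t\|\Delta_t(R,R')\|$ depends on the realized trajectory and on $W_t$. A net built from it is random, so the fixed-pair Freedman bound followed by a union bound does not apply directly. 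Use a deterministic metric instead, e.g.\ one uniform in $\theta$, with $\|W_t\|$ bounded.
\end{itemize}
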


\begin{proof}
    We first write
    $$\hat\gE_T(R) = \sum_{t=0}^{T-1}\| W^{1/2}b_t^\star + W^{1/2}M_{t+1} - W^{1/2} \nabla_\theta R(\hat\theta^{(t)}, D))\|_2^2.$$
    Then, we can expand the square to find that
    $$\hat\gE_T(R) = \gE_T(R) + 2 \sum_{t=0}^{T-1}\langle W^{1/2}b_t^\star  -  W^{1/2} \nabla_\theta R(\hat\theta^{(t)}, D)), W^{1/2}M_{t+1} \rangle + \sum_{t=0}^{T-1}\| W^{1/2}M_{t+1}\|_2^2.$$
    Continuing the argument, we note that $\hat \gE_T(\hat R_T) = \inf_{R \in \gR} \hat \gE_T(R)\leq \hat\gE_T(R)$, so it must hold that
    \begin{align*}
         & \gE_T(\hat R_T) + 2 \sum_{t=0}^{T-1}\langle W^{1/2}b_t^\star  -  W^{1/2} \nabla_\theta \hat R_T(\hat\theta^{(t)}, D)), W^{1/2}M_{t+1} \rangle + \sum_{t=0}^{T-1}\| W^{1/2}M_{t+1}\|_2^2 \\
         & \leq \gE_T(R) + 2 \sum_{t=0}^{T-1}\langle W^{1/2}b_t^\star  -  W^{1/2} \nabla_\theta R(\hat\theta^{(t)}, D)), W^{1/2}M_{t+1} \rangle + \sum_{t=0}^{T-1}\| W^{1/2}M_{t+1}\|_2^2.
    \end{align*}
    Cancelling and rearranging terms, we arrive at an expression for $\gE_T(\hat R_T) - \gE_T(R)$ that amounts to
    \begin{align*}
        \gE_T(\hat R_T) - \gE_T(R)
         & = 2 \sum_{t=0}^{T-1}\langle W^{1/2} \nabla_\theta \hat R_T(\hat\theta^{(t)}, D)) -  W^{1/2} \nabla_\theta R(\hat\theta^{(t)}, D)), W^{1/2}M_{t+1} \rangle \leq 2\sup_{R,R'\in \gR} \left|\sum_{t=0}^{T-1} X_{t+1}(R, R')\right|.
    \end{align*}
    Now, note that $\E[M_{t+1} \mid \gF_t]=0$ implies that $\E[X_{t+1}(R,R') \mid \gF_t] = 0$ is a martingale increment, with quadratic variation $V_T(R,R')$ and almost surely bounded increments $|X_{t+1}(R, R')|\leq c(R, R')$. We can then apply Freedman's inequality to find
    $$\prob\left(\left|\sum_{t=0}^{T-1}X_{t+1}(R,R')\right| \geq \sqrt{2 V_T(R,R') \epsilon} + c(R,R') \epsilon/3\right) \leq 2\exp(-\epsilon).$$
    Applying a union bound over a $\rho$-net of $\gR$ then yields our desired result:
    $$\hat\gE_T(\hat R_T) - \inf_{R \in \gR} \gE_T(R) \leq \inf_{\rho > 0} \sup_{R,R' \in \gR} O\left(\sqrt{V_T(R,R') \log \left(\frac{\gN_{\gR}(\rho)}{\delta}\right)} + c(R,R') \log \left(\frac{\gN_{\gR}(\rho)}{\delta}\right)\right).$$
\end{proof}


\section{Computing loss and regularizer gradients}
\label{app:computing-grads}
\paragraph{Computational complexity.}


Often, the loss is of the form $\mathcal{L}(\theta,D) = \mathcal{L}(\hat{Y}(\theta),D)$, where $\hat{Y}(\theta)=\hat{Y}(\theta,D)$ is the output of the model with weights $\theta$.
In that case, Equation \ref{eq:grad-matching} becomes
\[ \nabla_\theta \mathcal{R}({\theta},D)|_{\hat\theta(D)}^\top
    = -\left(\frac{\partial \mathcal{L}(\hat{Y},D)}{\partial\hat{Y}}\right)^\top
    \nabla_\theta{\hat{Y}}({\theta})|_{\hat\theta(D)},\]
where $\frac{\partial \mathcal{L}}{\partial\hat{Y}}$ is the partial derivative of the empirical loss with respect to the model output/predictions, and $\nabla_\theta{\hat{Y}}({\theta})$ is the Jacobian of the model output with respect to the model's weights.

For example taking the empirical loss to be mean-squared error:
$\mathcal{L}(\theta,D)= \frac{1}{2} \|{Y} - {\hat{Y}}({\theta})\|^2$, we obtain
\begin{align*}
    \nabla_\theta \mathcal{R}({\theta},D) |_{\hat\theta(D)} = ({Y} - {\hat{Y}}({\hat{\theta}}))^\top  \nabla_\theta{\hat{Y}}({\theta})|_{\hat\theta(D)}
\end{align*}
This is just the product of the model's residuals (${Y} - {\hat{Y}}$) and its Jacobian evaluated at the learned weights ($\nabla_\theta{\hat{Y}}({\theta})|_{\hat\theta(D)}$), both of which can be readily computed.

It is typically easy to derive and compute $\frac{\partial \mathcal{L}}{\partial\hat{ y}}$ for standard loss functions, which means we can compute the vector-Jacobian product $(\frac{\partial \mathcal{L}}{\partial\hat{ y}})^\top  \nabla_\theta{\hat{Y}}({\theta})$ instead of calculating and storing the Jacobian itself. This vector-Jacobian product has dimensionality $1\times p$, or equivalently $p$ if gradients are treated as column vectors. Computing it avoids explicitly  forming the full Jacobian $\nabla_\theta \hat Y(\theta)\in\mathbb R^{k\times p}$, where $k$ is output dimension of the model. Moreover, this computation doesn't depend on the choice of $\mathcal{R}$ at all -- this will be helpful when we iterate over candidate regularizers, as this quantity need not be recomputed.

As mentioned in the main text, our stationarity condition implies a system of $p$ equations where $p$ is the dimensionality of $\theta$. To see this, note that $\nabla_\theta \mathcal{R}({\theta},D) \in \R^p$, $\left(\tfrac{\partial \mathcal{L}(\hat{y},D)}{\partial\hat{ y}}\right)^\top \in \R^k$ where $k$ is the output size of the model, and $\nabla_\theta{\hat{y}}({\theta}) \in \R^{k \times p}$.

\section{Related work on estimating inductive bias}
\label{app:related-work}
\cite{boopathyExactComputationInductive2024} propose a Bayesian framework for quantifying the \emph{amount} of inductive bias needed to achieve generalization on some prediction task.
They define inductive bias of a task as the negative log probability that a hypothesis $h$ achieves some test error rate $\varepsilon$ -- intuitively, if a randomly sampled hypothesis $h\sim p_h$ is unlikely (small probability) then a large inductive bias is required to reliably find a generalizing hypothesis (large negative log-probability).

\cite{immerProbingQuantifyingInductive2022} propose a method of quantifying inductive bias based on probing intermediate representations.
They consider Bayesian evidence as a proxy for inductive bias, formalizing it as the maximum evidence (how likely it is that a particular dataset could have been generated by a given model) for some intermediate representation, over all possible probes in a function class (e.g. linear probes).
The intuition is that overly-simple models are unlikely because they don't explain the data well and complex models are unlikely because they are so high-dimensional that many candidate models can also explain the data equally well.

Neither of these approaches involves actually estimating the parameters of the kinds of implicit regularizers suggested in the literature, as we do in the present work. Rather they quantify inductive bias as a scalar quantity and try to judge the amount of inductive bias present in various models or representations. In these settings, any predictive model has a fixed amount of inductive bias on a prediction task -- there is no way to evaluate competing hypotheses for how that bias is produced.
By directly estimating the parameters of hypothesized regularizers, our framework enables empirically analyzing and contrasting theoretical predictions about implicit bias in realistic models.

\section{Bootstrapping}
\label{app:bootstrapping}


\begin{figure}[h!]
    \centering
    \includegraphics[width=\linewidth]{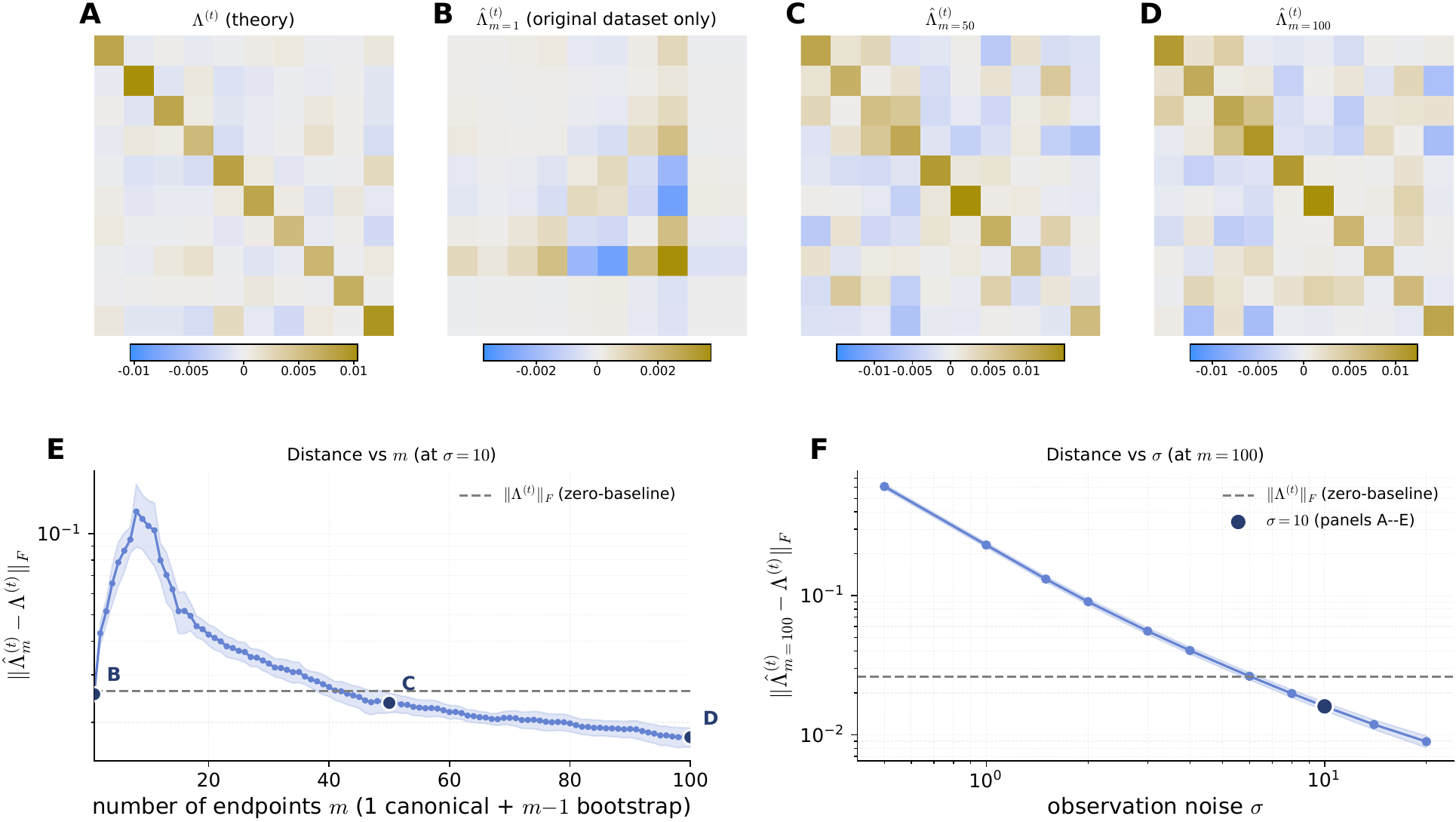}
    \caption{\textbf{Bootstrap recovery of the early-stopped GD regularizer.}
    Parallel to Figure~\ref{fig:ols-early-stopping} but with bootstrap resamples of a single training set in place of independent ground-truth weight redraws.  We use a noisier variant of the OLS DGP ($\sigma=10$ rather than $\sigma=1$ in the main composite figure): bootstrap recovery requires sufficient OLS sampling variance ($\propto \sigma/\sqrt{n}$) for the bootstrap-induced spread in $\hat\theta$ to span the $p(p+1)/2 = 55$-dim space of symmetric matrices.
    \textbf{(A)} Theoretical $\Lambda^{(t)}$ at the canonical fixed iterate $t=500$.
    \textbf{(B)} The minimum-norm fit from the single canonical training endpoint.  With only $p$ stationarity equations in $p(p+1)/2$ unknowns, the LS system is underdetermined and \texttt{lstsq} returns the smallest-Frobenius-norm $\hat\Lambda$ satisfying the constraints; most of the matrix is left at zero.
    \textbf{(C)} Recovery from $m=50$ endpoints (1 canonical + 49 bootstrap).
    \textbf{(D)} Recovery from $m=100$ endpoints, comfortably past the rank threshold $m \approx p$ where the LS design matrix transitions from underdetermined to overdetermined.
    \textbf{(E)} Distance to theory $\|\hat\Lambda_m^{(t)} - \Lambda^{(t)}\|_F$ as $m$ grows from $1$ to $100$ at $\sigma=10$, with markers at the $m$ values used in panels B--D.  The dashed gray line is $\|\Lambda^{(t)}\|_F$, equivalent to the trivial baseline $\hat\Lambda=0$. Band shows mean and 95\% CI across $10$ pools of bootstrap resamples.
    \textbf{(F)} Distance at $m=100$ swept over $\sigma$ (mean and 95\% CI across $10$ pools of bootstrap resamples).  Below $\sigma\approx 6$ bootstrap recovery is worse than the zero-baseline; above it the recovered matrix is increasingly informative.}
    \label{fig:ols-bootstrap-composite}
\end{figure}

In Figure~\ref{fig:ols-bootstrap-composite} we replicate the OLS early-stopping recovery experiment (Figure~\ref{fig:ols-early-stopping}) using bootstrap resampling of a single training set rather than independent draws of ground-truth weights.  Each bootstrap endpoint is trained from scratch on a row-resampled $(X^{*(b)},y^{*(b)})$ for $b=1,\dots,m-1$, with the canonical training endpoint included as $b=0$.  Compared to the multi-dataset experiment, bootstrap recovery is qualitatively weaker: the multi-dataset fixed-$t$ system is exactly consistent (all endpoints share the same $X$ and hence the same theoretical $\Lambda^{(t)}$), so it reaches numerical recovery at $m=p$, whereas the bootstrap system is only approximately consistent ($\Lambda^{*(b),(t)}$ varies with the resampled $X^{*(b)}$) and exhibits a residual model-misspecification floor.  Recovery is also gated on observation noise: the spread in $\hat\theta^{*(b)}$ scales as $\sigma/\sqrt n$, so for very clean data (small $\sigma$) bootstrap thetas cluster too tightly around $\hat\theta(D)$ for the LS system to be identifiable.  The $\sigma$ sweep in Panel F shows the practical regime: bootstrap recovery beats reporting zero only when the OLS sampling variance is non-negligible relative to the coefficient magnitude.

\begin{algorithm}[h!]
    \caption{Bootstrap-augmented estimation}
    \label{alg:bootstrap-ber}
    \begin{algorithmic}[1]
        \Require Dataset \(D\), learning algorithm \(\mathcal{A}\), regularizer family \(\mathcal{R}(\theta,D,\Lambda)\), number of bootstrap replicates \(B\)
        \Ensure Estimate \(\hat{\Lambda}\)

        \State Train the predictive model on \(D\) to obtain \(\hat{\theta}(D)\)

        \For{\(b=1,\dots,B\)}
        \State Sample a bootstrap sample \(D^{*(b)}\) from \(D\) with replacement
        \State Retrain the predictive model on \(D^{*(b)}\) to obtain \(\hat{\theta}^{*(b)}\)
        \EndFor

        \State Estimate \(\hat{\Lambda}\) by minimizing
        \[
            \left\|
            \nabla_\theta \mathcal{R}(\theta,D,\Lambda)\big|_{\hat{\theta}(D)}
            +
            \nabla_\theta \mathcal{L}(\theta,D)\big|_{\hat{\theta}(D)}
            \right\|_2^2
            +
            \sum_{b=1}^{B}
            \left\|
            \nabla_\theta \mathcal{R}(\theta,D^{*(b)},\Lambda)\big|_{\hat{\theta}^{*(b)}}
            +
            \nabla_\theta \mathcal{L}(\theta,D^{*(b)})\big|_{\hat{\theta}^{*(b)}}
            \right\|_2^2
        \]

        \State \Return \(\hat{\Lambda}\)
    \end{algorithmic}
\end{algorithm}

\FloatBarrier

\section{Additional considerations for stable estimation of implicit regularization}

\subsection{Normalizing gradients} When candidate regularizers induce gradients with very different norms, we use a normalized gradient-matching variant that first rescales each candidate gradient to unit norm, fits coefficients in that normalized basis, and then maps back to the original regularization parameters.

\section{Further experiment details}

\subsection{Illustrative figures (Figs.~\ref{fig:tradeoff-vis} and \ref{fig:gradient-residuals})}
Figures~\ref{fig:tradeoff-vis} and \ref{fig:gradient-residuals} are programmatic illustrations rather than empirical experiments. Figure~\ref{fig:tradeoff-vis} illustrates ridge regression on a scalar linear model $y = \theta x + \varepsilon$ with $\varepsilon \sim \mathcal{N}(0, \sigma^2)$, $\sigma = 1.8$, $n = 20$ observations, $x \sim \mathcal{N}(0,1)$, true $\theta = 2.0$, and $\ell_2$ penalty strength $\lambda = 0.8$. The MSE, $\ell_2$ penalty, and total objective are plotted as functions of the scalar parameter $\theta$; gradient arrows at $\hat\theta_{\mathrm{ridge}}$ confirm the equal-and-opposite stationarity condition $\nabla_\theta \mathcal{L} + \nabla_\theta \ell_2 = 0$. Figure~\ref{fig:gradient-residuals} simulates a two-parameter $\tanh(wx+b)$ network on $96$ training points $x \in [-2.5, 2.5]$ with target $y = 0.75\,\tanh(1.4x - 0.35) + 0.12\,\sin(2.7x)$; gradient flow is approximated by full-batch GD at learning rate $4\times 10^{-3}$ for $3000$ steps, and SGD at learning rate $0.40$ with batch size $8$ for $6$ steps.

\subsection{Recovering elastic net}

We trained a non-linear neural network with one hidden layer (200 units) and ReLU activation to predict a scalar output from a 10-dimensional input. The model was trained using stochastic gradient descent with a learning rate of $10^{-3}$ for up to 200 epochs, with early stopping (patience = 15) based on training loss.

Synthetic data were generated by sampling inputs $X \in \mathbb{R}^{5000 \times 10}$ from a standard normal distribution, and computing targets as
\[
    y = X\theta + \varepsilon, \quad \theta_i \stackrel{\text{iid}}{\sim} \mathcal{N}(0, 5^2), \quad \varepsilon_j \stackrel{\text{iid}}{\sim} \mathcal{N}(0,\, 0.5^2).
\]
Data were split 80/20 into training and test sets, and the network was trained in mini-batches of size 32. The bias estimator $\smash{\hat\lambda}$ was fit by Adam at learning rate $10^{-2}$ for up to $1000$ epochs with early-stopping patience $50$ on the gradient-matching loss.
For each $(\lambda_1,\lambda_2,\beta)$ in a grid search, we repeated the pipeline on independent synthetic datasets, resampling $X$, $\beta$, $\varepsilon$, and the train/test split with distinct random seeds (ten seeds per configuration).
In Figure~\ref{fig:elasticnet-recovery} we plot the mean estimated $\smash{\hat\lambda_i}$ against the true $\lambda_i$ with $\pm 1$ standard error bars across replicates, so that variance across runs is visible alongside any systematic bias.

We first attempted elastic net regularization (combining $\ell_1$ and $\ell_2$ penalties), but it failed to recover the true parameters. This is likely due to the non-smoothness of the $\ell_1$ norm, which hinders gradient-based optimization and biases solutions toward zero.

Instead, we applied a smoothed $\ell_1$ penalty, related to the Huber loss:
\[
    \mathcal{R}(\theta, \beta) =
    \begin{cases}
        \frac{1}{2\beta} \theta^2   & \text{if } |\theta| < \beta \\
        |\theta| - \frac{1}{2}\beta & \text{otherwise}
    \end{cases}
\]
with smoothing parameter $\beta = 0.001$ and $\ell_1$ penalty weight set to $1.0$.

The gradient is
\[
    \frac{d\mathcal{R}}{d\theta}=
    \begin{cases}
        \dfrac{\theta}{\beta}, & |\theta|<\beta,   \\[0.25em]
        \mathrm{sign}(\theta), & |\theta|\ge\beta.
    \end{cases}
\]
For \(|\theta|<\beta\), the behavior is purely quadratic---identical to an \(\ell_2\) penalty of strength \(1/\beta\)---whereas for \(|\theta|\ge\beta\) it is linear (\(\ell_1\)). When the coefficient \(\lambda_1\) on this penalty is very large (and \(\beta\) small), nearly all \(\theta\) lie in the quadratic region, so
\(
\lambda_1\,\mathcal{R}(\theta)\approx\tfrac{\lambda_1}{2\beta}\,\theta^2,
\)
i.e.\ an effective \(\ell_2\) penalty of strength \( \tfrac{\lambda_1}{\beta}\), which produces the same severe under-fitting characteristic of heavy \(\ell_2\) regularization. Therefore when the penalty is large and many $\theta$ are close to zero the $\lambda_1, \lambda_2$ are not identifiable.

Figure \ref{fig:elasticnet-recovery} summarizes elastic-net recovery at $\beta = 10^{-3}$ on a $6\times 6$ grid of true $(\lambda_1,\lambda_2)$ with $10$ independent dataset resamples per cell.
Each panel plots the estimated $\hat\lambda_i$ against the true $\lambda_i$ on log--log axes, with one line per value of the other true penalty and error bars giving $\pm 1$ standard error across seeds; the dashed diagonal marks perfect recovery.
For large penalty values, smoothing near zero makes estimation inaccurate because the true parameters are not identifiable when many $\theta$ are small.

\subsection{OLS early-stopping experiments}
\label{app:ols-methods}

All OLS panels in Figure~\ref{fig:ols-early-stopping} use a shared noisy data-generating process. We draw $X \in \mathbb{R}^{1000 \times 10}$ with iid standard normal entries, draw coefficients $\theta_i \stackrel{\text{iid}}{\sim}\mathcal{N}(0,3^2)$, and observe
\[
    y = X\theta + \varepsilon,
    \qquad
    \varepsilon_i \stackrel{\text{iid}}{\sim}\mathcal{N}(0,1).
\]
The observation noise means that the empirical OLS minimizer is an unbiased but sample-specific estimate of the coefficient vector, so the trained weights need not coincide exactly with the ground truth even when the effective regularizer reproduces the observed-data solution.

For the single-endpoint diagonal estimate and the scalar $\lambda_t$ trace, we use seed $56$ for the shared draw of $(X,\theta,\varepsilon)$. The diagonal estimate (Panel C) trains full-batch GD from zero for up to $500$ epochs with early-stopping patience $5$ and no minimum improvement threshold (i.e.\ any decrease counts as improvement). The scalar trace (Panel E) evaluates a single deterministic GD trajectory at checkpoints $t \in \{1,2,5,10,20,50,100,150,200,300,500,1000\}$ and fits one ridge coefficient at each checkpoint by gradient matching with Adam (learning rate $0.05$, maximum $30{,}000$ epochs, patience $2000$). The GD trajectory for Panel E is distinct from the early-stopped model in Panels A--C; it runs for exactly the number of steps needed to reach $t=1000$ without early stopping, and the bias estimator at each checkpoint is independently early-stopped.

For the full symmetric-matrix estimator, we hold the design matrix $X$ fixed and form $10$ independent endpoint pools of $100$ endpoints each, with a newly drawn coefficient vector and independent observation noise per endpoint.

\paragraph{Panel B (fixed iterate).} To ensure the stacked system $\Lambda^{(t)}\hat\theta_k = -\nabla L_k(\hat\theta_k)$ has a single well-defined theoretical target $\Lambda^{(t)}$, all endpoints in the Panel B pools are stopped at the same fixed iterate $t$ equal to the early-stopping epoch of the single-endpoint run from Panels A and C. Each endpoint runs full-batch GD from zero for up to $2000$ epochs; the iterate at the shared $t$ is extracted without early stopping.

\paragraph{Panel D (variable early stopping).} To demonstrate recovery in a more realistic and challenging setting, Panel D uses per-endpoint early stopping: each endpoint runs full-batch GD from zero for up to $2000$ epochs with patience $5$ and no minimum improvement threshold, and its individual iterate count $t_k$ is determined by when its own training loss plateaus. Because $t_k$ varies across endpoints, the theoretical target used to evaluate the distance $\|\hat{\Lambda}^{(t_k)}_m - \Lambda^{(t_k)}\|$ is the per-pool median of the endpoint-specific matrices $\Lambda^{(t_k)}$; the variance in $t_k$ across endpoints within a pool is small in practice, making this a mild approximation. The smooth decay in Panel D (with no identifiability cliff at $m=p$) reflects the richer signal in variable-stop endpoints compared to a fixed-$t$ setup.

Figure \ref{fig:ols-early-stopping}E uses the same noisy OLS data-generating process as above, and we trained the regression model using full-batch gradient descent with step size $\eta = 10^{-2}$ for the normalized squared loss, evaluating $\hat\theta^{(t)}$ at $t \in \{1, 2, 5, 10, 20, 50, 100, 150, 200, 300, 500, 1000\}$. At each $t$ we fit a single-parameter ridge bias by gradient matching using Adam (learning rate $0.05$, up to $30{,}000$ epochs, early-stopping patience $2000$). The figure shows a single trajectory (not averaged).

\subsection{Implicit bias of dropout}
We train deep ReLU classifiers on MNIST with varying dropout rates
$p \in \{0, 0.05, 0.10, 0.15, 0.20, 0.30, 0.40, 0.50\}$,
depths $d \in \{3, 5\}$ (number of hidden layers),
and widths $w \in \{128, 256, 512\}$.
No explicit weight decay is used ($\lambda = 0$).
Models are trained on the standard MNIST training split (with a $10\%$ held-out validation slice) using mini-batch SGD with momentum $0.9$, learning rate $0.05$, batch size $2048$, for up to $500$ epochs with early-stopping patience $50$ on validation loss.
After training each model we estimate the best effective $\ell_2$ regularizer (scalar ridge penalty $\hat\lambda \|\theta\|_2^2$) by gradient matching, fitting the single parameter with Adam at learning rate $0.05$ for up to $2000$ epochs and bias-fit patience $100$.
We repeat each configuration across $10$ random seeds.

\subsection{Recovering Barrett implicit gradient regularization}
\label{app:igr-methods}

This appendix details the known-coefficient recovery experiment for the implicit gradient regularizer derived by \citet{barrettImplicitGradientRegularization2022}. The goal is not to revalidate all of \citeauthor{barrettImplicitGradientRegularization2022}'s empirical claims about generalization. Instead, we use their derivation as a calibration target: if our trajectory-gradient-matching estimator is working, it should recover the coefficient $\lambda = \eta p/4$ from observed deviations between discrete gradient descent and gradient flow.

\paragraph{Relation to Equation 4.}

The trajectory-deviation estimator in Eq. 4 compares an observed update to an unregularized reference trajectory. In the Barrett--Dherin setting, the observed update is one full-batch GD/Euler step, while the relevant unregularized reference is the continuous gradient flow of $L$ over the same elapsed time $\eta$, not the linearized displacement $-\eta \nabla L(\theta)$. Comparing directly to $-\eta \nabla L(\theta)$ would give zero residual for full-batch GD; using the gradient-flow displacement instead isolates the finite-step backward-error term that corresponds to implicit gradient regularization.

\paragraph{Background and notation.}
For a loss $\mathcal{L} : \mathbb{R}^p \to \mathbb{R}$ with gradient $g(\theta) := \nabla\mathcal{L}(\theta)$ and Hessian $H(\theta) := \nabla^2\mathcal{L}(\theta)$, discrete gradient descent with step size $\eta$ iterates $\theta_{k+1} = \theta_k - \eta\,g(\theta_k)$. \citet{barrettImplicitGradientRegularization2022} apply backward error analysis to this Euler integration of the continuous gradient flow $\dot\theta = -g(\theta)$ and show that, for sufficiently smooth $\mathcal{L}$ and sufficiently small $\eta$, the discrete iterates lie close to the \emph{modified} gradient flow
\[
    \dot\theta \;=\; -g(\theta) \;-\; \tfrac{\eta}{2}\,H(\theta)\,g(\theta) \;+\; O(\eta^2)
    \;=\; -\nabla\widetilde{\mathcal{L}}(\theta) \;+\; O(\eta^2),
    \qquad
    \widetilde{\mathcal{L}}(\theta) \;=\; \mathcal{L}(\theta) + \lambda\,R_{IG}(\theta),
\]
where $\lambda = \eta p/4$ and $R_{IG}(\theta) = \|g(\theta)\|_2^2 / p$. The implicit regularizer's gradient is $\nabla\mathcal{R}(\theta,\lambda) = (2\lambda/p)\,H(\theta)g(\theta) = (\eta/2)\,H(\theta)g(\theta)$ at Barrett's identification of $\lambda$.

\paragraph{A trajectory-based estimator for $\lambda$.}
Trained neural networks are not stationary points of the augmented loss $\widetilde{\mathcal{L}}$, so the static stationarity condition $g + \nabla\mathcal{R} = 0$ used elsewhere in this paper does not hold meaningfully along the trajectory. Instead, we estimate $\lambda$ from the trajectory-deviation form. Fix a parameter vector $\theta$ and compare a single full-batch GD step
\[
    \Delta\theta_{\text{GD}} \;=\; -\eta\, g(\theta)
\]
to the displacement produced by the \emph{true} continuous gradient flow of $\mathcal{L}$ over elapsed time $\eta$,
\[
    \Delta\theta_{\text{flow}} \;=\; \varphi_\eta(\theta) - \theta,
    \qquad
    \varphi_t(\theta) \;:=\; \text{solution at time } t \text{ of } \dot\theta(s) = -g(\theta(s)),\ \theta(0) = \theta.
\]
A Taylor expansion of $\varphi_\eta$ gives $\Delta\theta_{\text{flow}} = -\eta g + (\eta^2/2)\,Hg + O(\eta^3)$, so the per-unit-time discrepancy
\[
    \mathcal{T}(\theta, \eta) \;:=\; \frac{\Delta\theta_{\text{flow}} - \Delta\theta_{\text{GD}}}{\eta} \;=\; \frac{\eta}{2}\,H(\theta)\,g(\theta) \;+\; O(\eta^2)
\]
matches $\nabla\mathcal{R}(\theta,\lambda)$ exactly when $\lambda = \eta p/4$. Given observations $\{(H(\theta_t)g(\theta_t),\,\mathcal{T}(\theta_t,\eta))\}_{t=1}^{T}$ at parameter values $\theta_t$ visited along a probe trajectory, the scalar least-squares fit admits a closed form:
\begin{equation}
    \widehat{\tfrac{\lambda}{p}} \;=\; \arg\min_{c}\; \sum_{t=1}^{T} \bigl\lVert \mathcal{T}(\theta_t,\eta) - 2c\, H(\theta_t)g(\theta_t) \bigr\rVert_2^2
    \;=\; \tfrac{1}{2}\,\frac{\sum_t \bigl\langle H(\theta_t)g(\theta_t),\, \mathcal{T}(\theta_t,\eta) \bigr\rangle}{\sum_t \bigl\lVert H(\theta_t)g(\theta_t) \bigr\rVert_2^2}
    \label{eq:igr-ols}
\end{equation}

\paragraph{Computing the reference flow.}
We compute $\Delta\theta_{\text{flow}}$ with classical fourth-order Runge--Kutta integration of $\dot\theta=-g(\theta)$, rather than by taking smaller Euler steps. Sub-stepped Euler would introduce the same first-order backward-error term we are trying to estimate, so it would partially build the target into the numerical reference. RK4 gives a high-accuracy approximation to the continuous gradient flow while preserving the interpretation of $\mathcal{T}$ as the discrepancy between one discrete GD step and the corresponding continuous-time flow segment.

\paragraph{Experimental protocol for Figure \ref{fig:igr}.}
We train five-hidden-layer MLPs on MNIST with hidden width $w \in \{50, 100, 200, 400, 800, 1600\}$ and step size $\eta \in \{5\times10^{-4}, 10^{-3}, 5\times10^{-3}, 10^{-2}, 5\times10^{-2}, 10^{-1}, 5\times10^{-1}\}$. We use $\tanh$ activations rather than ReLU activations because the estimator for this regularizer requires $H(\theta)g(\theta)$; equivalently, it differentiates the gradient of the gradient-penalty regularizer, so the loss must be twice differentiable in the parameters. Training uses mini-batch stochastic gradient descent with batch size $128$ on a $10{,}000$-sample MNIST training subset, with a $1{,}000$-sample held-out validation subset (drawn disjointly from the MNIST training split) and a $5{,}000$-sample test subset, with no weight decay, dropout, momentum, or learning-rate schedule.

We evaluate each model at the checkpoint with highest \emph{validation} accuracy among checkpoints that fit the training data. We use validation rather than test accuracy for checkpoint selection so that the test accuracy reported in Figure~\ref{fig:igr} (Middle) is independent of the selection criterion. Because the smooth $\tanh$ networks and reduced training subset do not always reach exact interpolation within the computational budget, we use a relaxed train-accuracy threshold of $90\%$ and exclude runs that fail to reach it. At the selected snapshot $\theta^\star$, we compute $R_{IG}(\theta^\star) = \|g(\theta^\star)\|_2^2 / p$ and estimate $\lambda$ by running $T=5$ additional full-batch probe steps from $\theta^\star$, collecting $(H(\theta_t)g(\theta_t),\,\mathcal{T}(\theta_t,\eta))$ at each step. The Hessian-vector product is computed by double backpropagation, and each RK4 reference uses $k=10$ substeps. For computational tractability, the probe loss for $R_{IG}$ and $\hat\lambda$ is evaluated on a fixed $2{,}048$-example training subset.

\paragraph{Caveats on the per-step estimator.}
The identity $\mathcal{T} = (\eta/2)\,Hg + O(\eta^2)$ is a local Taylor/backward-error prediction under smoothness of $\mathcal{L}$. Thus the recovery panel in Figure \ref{fig:igr} should be read as a calibration check for our trajectory estimator against a known analytic coefficient, not as an independent proof that IGR explains test accuracy. The left and middle panels retain the qualitative Barrett Figure 2 comparisons between $\hat\lambda$, $R_{IG}$, and test accuracy; the right panel is the direct methodological check, plotting $\hat\lambda$ against $\eta p/4$.


\section{Compute resources}
\label{app:compute}

All experiments were orchestrated by a Snakemake workflow that submits one
SLURM job per rule on a shared university cluster. Per-rule resource requests are declared in the workflow
rules under \texttt{workflow/rules/train.smk}; we summarize them here for
reproducibility. All times are SLURM wall-clock allocations rather than
measured run times, and all rules ran with \texttt{rerun-incomplete} and a
single restart on failure.

\paragraph{Cluster hardware.}
SLURM jobs were dispatched to nodes with two 32-core x86\_64 sockets (64 cores
per node, one thread per core), 1\,TB of RAM, and eight NVIDIA L40S GPUs per
GPU node, running RHEL 8 (Linux 4.18). Single-GPU rules acquired one L40S via
\texttt{--gres=gpu:1}; CPU-only rules ran on the same nodes without a GPU
reservation. Cheap rules (figure plotting, \LaTeX{} compilation, W\&B parquet
pulls, and the sweep launcher itself) are listed as \texttt{localrules} and
ran on the submitting host rather than as separate SLURM jobs.

\paragraph{CPU-only training rules.}
The OLS early-stopping experiments (Section~\ref{app:ols-methods}) and the
\(\lambda\)-vs-epochs trace are pure CPU jobs on a 10-dimensional linear
regression. Each was requested with 2 CPUs and 8\,GB of RAM (16\,GB and a
240-minute wall clock for \texttt{lambda\_vs\_epochs}); the matrix-recovery
and bootstrap-recovery rules were each given a 60-minute wall clock. The full
set of CPU rules covers single-endpoint OLS (with both the default
\(\sigma=1\) and the noisier \(\sigma=10\) baseline used for bootstrapping),
the 100-endpoint \(\times\) 5-pool fixed-iterate and variable-stop matrix
recoveries, the bootstrap analogues at 100 resamples \(\times\) 10 pools, and
the bootstrap sigma sweep (Appendix~\ref{app:ols-methods}, Panel~F).

\paragraph{Single-GPU training rules.}
Implicit-gradient-regularization experiments
(Section~\ref{app:igr-methods}) were submitted as single-GPU jobs requesting
1 L40S GPU, 4 CPUs, and 32\,GB of RAM. The Barrett Figure~2 grid sweep
(\(6\) widths \(\times\) \(7\) step sizes \(\times\) \(50\) epochs of
mini-batch SGD on the MNIST subset, plus \(5\) full-batch probe steps with
\(k=10\) RK4 substeps per probe) was given a 240-minute wall clock. The four
long-horizon trajectory rules (synthetic at \(\eta\in\{0.01,0.03\}\) for 50
GD steps, and MNIST with \(\tanh\) and ReLU activations at \(\eta=0.01\)
for 30 GD steps) each had a 360-minute wall clock; the longer budget covers
the per-step Hessian--gradient products and the RK4 reference flow used in
the trajectory-deviation estimator. All single-GPU rules autodetect device
(CUDA \(\succ\) MPS \(\succ\) CPU); on the cluster they consistently selected
the requested L40S.

\paragraph{W\&B sweep rules.}
The elastic-net recovery, dropout-bias, and dropout-mixed-bias experiments
were run as Weights \& Biases sweeps fanned out by the
\texttt{launch\_wandb\_sweep} rule. Each sweep submitted 8 SLURM agent jobs;
each agent requested 1 GPU, 4 CPUs, 32\,GB of RAM, and a 4-hour wall clock,
and consumed runs from the sweep until the assigned grid was exhausted. The
elastic-net sweep is a \(6\times 6\) grid over \((\lambda_1, \lambda_2)\)
with \(10\) seeds (\(360\) runs total). The dropout-bias sweep is a
\(3\) (depth) \(\times\) \(3\) (width) \(\times\) \(8\) (dropout) \(\times\)
\(10\) (seed) grid (\(720\) runs); we only report depths 3 and 5, excluding 1 to save space (the trends hold). Run summaries are pulled into a parquet snapshot under
\texttt{data/generated/<analysis>/runs.parquet} so that figure rebuilds
never re-query W\&B.

\paragraph{Default fallback.}
Rules without explicit resource requests inherit the SLURM-profile defaults
of 4 CPUs, 16\,GB of RAM, and a 120-minute wall clock on the same anonymized
GPU partition. The full Snakemake DAG can be reproduced
end-to-end via \texttt{uv run snakemake --profile workflow/profiles/slurm
paper}.


\end{document}